\documentclass[letterpaper]{article} 
\usepackage{aaai2026}  
\usepackage{times}  
\usepackage{helvet}  
\usepackage{courier}  
\usepackage[hyphens]{url}  
\usepackage{graphicx} 
\urlstyle{rm} 
\usepackage{natbib}  
\usepackage{caption} 
\frenchspacing  
\setlength{\pdfpagewidth}{8.5in}  
\setlength{\pdfpageheight}{11in}  
%
\usepackage{algorithm}
\usepackage{algorithmic}

\usepackage{booktabs}       
\usepackage{amsmath,amsfonts,amsthm}
\usepackage{nicefrac}       
\usepackage{amsmath}
\newtheorem{theorem}{Theorem}
\usepackage{makecell}
\usepackage{subcaption}
\usepackage{rotating}
\usepackage{multirow}
\DeclareCaptionFont{sevenpt}{\fontsize{7pt}{8pt}\selectfont}
\usepackage{booktabs}       
\usepackage{amsmath,amsfonts,amsthm}
\usepackage{nicefrac}       
\usepackage{amsmath}
\usepackage{makecell}
\usepackage{subcaption}
\usepackage{rotating}
\usepackage{needspace}
\usepackage{placeins}
\usepackage{afterpage}
\usepackage{multicol}
\usepackage{caption}
\usepackage{subcaption}
\usepackage{float}
%

\usepackage{multirow}
%
\usepackage{newfloat}
\usepackage{listings}
\DeclareCaptionStyle{ruled}{labelfont=normalfont,labelsep=colon,strut=off} 
\lstset{%
	basicstyle={\footnotesize\ttfamily},
	numbers=left,numberstyle=\footnotesize,xleftmargin=2em,
	aboveskip=0pt,belowskip=0pt,%
	showstringspaces=false,tabsize=2,breaklines=true}
\floatstyle{ruled}
\newfloat{listing}{tb}{lst}{}
\floatname{listing}{Listing}
%
\pdfinfo{
/TemplateVersion (2026.1)
}

\setcounter{secnumdepth}{0} 

%


\title{Efficient Reinforcement Learning for Zero-Shot Coordination in Evolving Games}

\author{
Bingyu Hui,
Lebin Yu,
Quanming Yao,
Yunpeng Qu,
Xudong Zhang,
Jian Wang\thanks{Corresponding author}
}
\affiliations{
Department of Electronic Engineering, Beijing National Research Center for Information Science and Technology, Tsinghua University, Beijing 100084, China\\
huiby23@mails.tsinghua.edu.cn, paladinee15@gmail.com, qyaoaa@tsinghua.edu.cn, qyp21@mails.tsinghua.edu.cn, zhangxd@tsinghua.edu.cn, jian-wang@tsinghua.edu.cn
}

\usepackage{bibentry}

\begin{document}

\maketitle

\begin{abstract}
Zero-shot coordination(ZSC), a key challenge in multi-agent game theory, has become a hot topic in reinforcement learning (RL) research recently, especially in complex evolving games. It focuses on the generalization ability of agents, requiring them to coordinate well with collaborators from a diverse, potentially evolving, pool of partners that are not seen before without any fine-tuning. Population-based training, which approximates such an evolving partner pool, has been proven to provide good zero-shot coordination performance; nevertheless, existing methods are limited by computational resources, mainly focusing on optimizing diversity in small populations while neglecting the potential performance gains from scaling population size. To address this issue, this paper proposes the Scalable Population Training (ScaPT), an efficient RL training framework comprising two key components: a meta-agent that efficiently realizes a population by selectively sharing parameters across agents, and a mutual information regularizer that guarantees population diversity. To empirically validate the effectiveness of ScaPT, this paper evaluates it along with representational frameworks in Hanabi cooperative game and confirms its superiority.
\end{abstract}

\section{Introduction}

As Multi-Agent Reinforcement Learning (MARL) continues to achieve remarkable success in a wide range of complex games and domains, its scalability is increasingly hindered by the greedy growth of computational resource demands, in the scenarios like agent MOBA games \citep{gao2023towards}, robotic soccer \citep{zhu2024dynamic}, and intelligent manufacturing systems \citep{do2025heterogeneous} owing to the demand for training a diverse and evolving set of agents.
These challenges highlight the need for more scalable and resource-efficient MARL frameworks.

We address this problem in the context of Zero-Shot Coordination (ZSC)—a setting where agents must collaborate with unseen partners. This scenario is central to what we term “evolving games”: environments where the pool of potential partners is not fixed, but rather represents a continuously shifting or evolving set of strategies. While self-play \citep{lowe2017multi} is widely used for training cooperation, it often leads to overfitting, limiting agents' ability to generalize. To overcome this, \citet{hu2020other} introduced the ZSC problem, where agents are evaluated on their ability to coordinate without prior exposure to their partners. Population-based training \citep{jaderberg2017population} offers a promising solution by exposing agents to a diverse set of behaviors during training, thus encouraging more generalizable coordination strategies \citep{charakorn2023generating}. In principle, if the training population covers all potential partner strategies, ZSC would be effectively solved. However, approaching this ideal requires training a large and diverse population of agents, which is itself constrained by limited computational resources, hindering the development of efficient and scalable RL methods. 

\begin{figure}[t!]
  \centering
  \includegraphics[width=\linewidth]{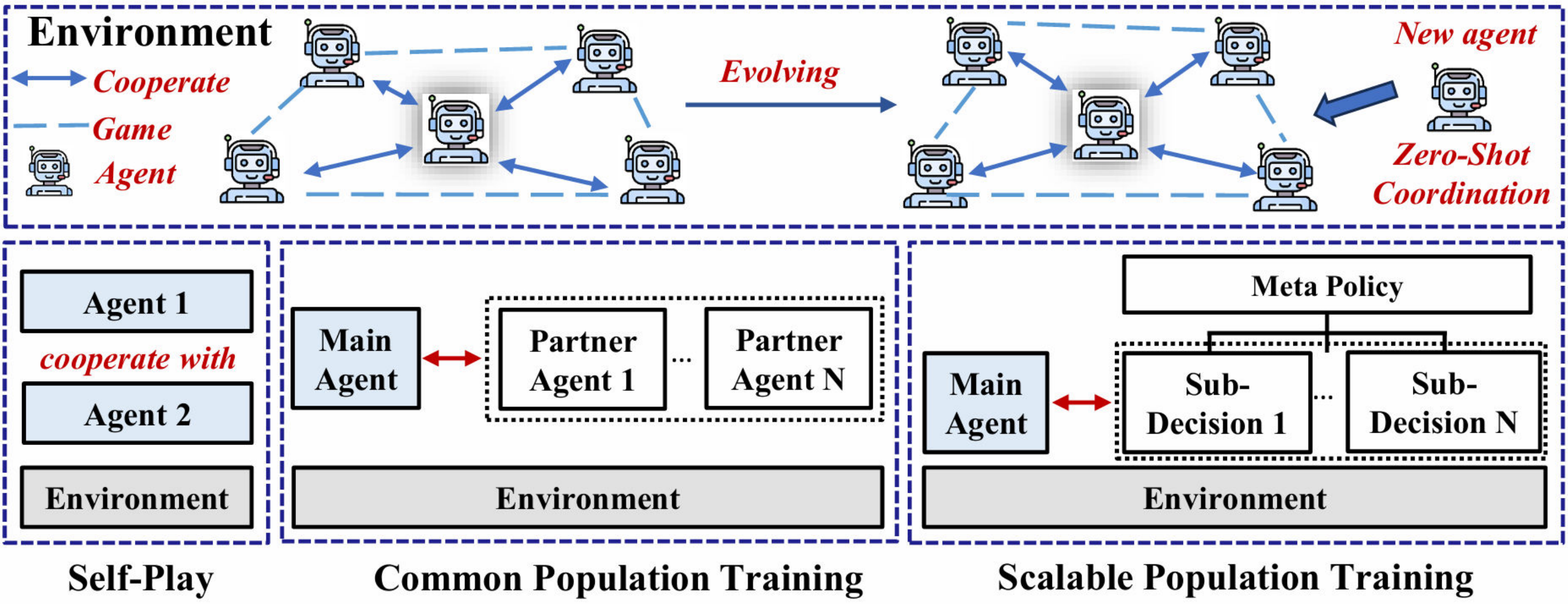}
  \caption{The diagram of different training paradigms in evolving games.}
  \label{fig:structure}
  
\end{figure}

Recent methods focus on improving Zero-Shot Coordination (ZSC) performance by enhancing population diversity, which serves as a key proxy for strategic variation in these evolving games. For example, Trajectory Diversity for Zero-Shot Coordination \citep{lupu2021trajectory} maximizes trajectory-level diversity to foster generalizable strategies. Similarly, Maximum Entropy Population-Based Training \citep{zhao2023maximum} uses entropy maximization to broaden behavioral coverage and improve human-AI coordination. Heterogeneous Multi-Agent Zero-Shot Coordination by Coevolution \citep{xue2024heterogeneous} tackles real-world heterogeneity by co-evolving two distinct populations through iterative pairing and selection, enabling robust cooperation with diverse partners.

However, constrained by limited computational resources, existing studies typically maintain relatively small population sizes and attempt to improve the zero-shot coordination performance of trained agents by optimizing objective functions to encourage policy “diversity” within these small populations. Nevertheless, focusing solely on individual-level behavioral diversity while neglecting the potential impact of population scale may limit the overall effectiveness of population-based methods in truly capturing the breadth of an evolving game's strategy space. In fact, the constrained population size could be one of the critical bottlenecks restricting current ZSC performance.

To address this challenge, we propose an efficient RL with scalable population-based training framework: ScaPT. Figure~\ref{fig:structure} illustrates the differences among self-play, common population-based training, and our proposed framework in evolving games. Our goal is to enable ZSC training to scale more efficiently to larger populations, mitigating the constraints imposed by computational resources. To achieve this, our framework is designed to meet two key criteria:
(1) alleviate the coupling between population size and computational and parameter overhead, thereby mitigating the linear growth of resource consumption with increasing population size;
(2) maintain comparable or even superior ZSC performance relative to conventional population-based methods.
(2) maintain comparable or even superior ZSC performance relative to conventional population-based methods.

To satisfy these requirements, ScaPT employs a hierarchical meta-agent architecture that simulates arbitrarily large populations within a single network, thereby avoiding redundant training across individuals and effectively suppressing the linear growth of computational costs with population size. To promote behavioral diversity, ScaPT maximizes the conditional mutual information between the actions generated by sub-decision modules under given observations and the modules themselves.

In experiments conducted on two standard ZSC benchmark environments, matrix games and Hanabi, ScaPT achieves significantly better performance than existing algorithms, primarily due to its scalable population design that allows it to leverage larger agent pools under the same memory resources constraints. Furthermore, when comparing different algorithms under the same scalable population framework, ScaPT demonstrates outstanding zero-shot coordination game capabilities.

Our contributions are summarized below:
\begin{itemize}
    \item We introduce ScaPT, a scalable population training framework for zero-shot coordination that surpasses the population size limits of conventional methods.
    \item ScaPT designs the scalable population using a hierarchical meta-agent architecture to simulate large populations, achieving a high degree of decoupling between population size and computational resources, while employing conditional mutual information to ensure diversity within the scalable population.
    \item Under constrained memory budgets, ScaPT outperforms prior methods by leveraging significantly larger agent populations, achieving superior zero-shot coordination performance with the same or lower computational cost.
\end{itemize}

\section{Related Work}

Self-play \citep{yu2022surprising} is widely used in MARL, and can efficiently train cooperative agents, but it typically fails under zero-shot coordination. To address this issue, recent works introduce additional reasoning abilities, such as break symmetries of the task to keep agents from learning specified strategies \citep{hu2020other,treutlein2021new,muglich2022equivariant}, conduct multi-level reasoning for higher-level consensus \citep{cui2021k,hu2021off} and predict partners' actions \citep{lucas2022any, yan2024efficient}. However, these approaches often lead to algorithm-level consensus or overfitting to a limited set of partners.

Population-based training enhances generalization by exposing a main agent to diverse partners. Common approaches include reducing the collaboration scores of different partner agents within the population to make them behave differently \citep{charakorn2023generating,rahman2023generating}, improving trajectory diversity of agents \citep{lupu2021trajectory} and increasing policy entropy of the population \citep{zhao2023maximum}. However, most approaches train distinct networks per agent or require differentiable action distributions \citep{lupu2021trajectory, zhao2023maximum}, limiting efficiency and applicability; In contrast, ScaPT employs a meta-agent for efficient population training, and designs a generic mutual information term to guarantee population divergence. This differs from MI-based skill-discovery methods such as DIAYN \citep{eysenbach2018diversity} and DADS \citep{sharma2019dynamics}, which increase intra-agent skill diversity rather than inter-agent policy diversity.

Notably, the meta-agent in ScaPT is different from the agents in meta-RL \citep{nagabandi2018learning,gupta2018meta}. Traditional meta-RL aims to train agents that can quickly adapt to various tasks, the diversity of which is innate and invariable. In contrast, ScaPT's meta-agent is designed to exhibit various policies, the diversity of which is variable and what we seek to augment. 

\section{Scalable Population Training framework for Zero-Shot Coordination}

We present ScaPT, a scalable population-training framework that achieves diverse zero-shot coordination.

\subsection{Hierarchical Meta-Agent For Scalable Population}

A major drawback of population training is its high resource demand. Inspired by meta-RL and multi-task learning \citep{ma2018modeling,sagirova2025srmt}, task-related parameters can be shared, while behavior-specific submodules are optimized independently.

\begin{figure}[t]
    \centering
    \includegraphics[width=\columnwidth]{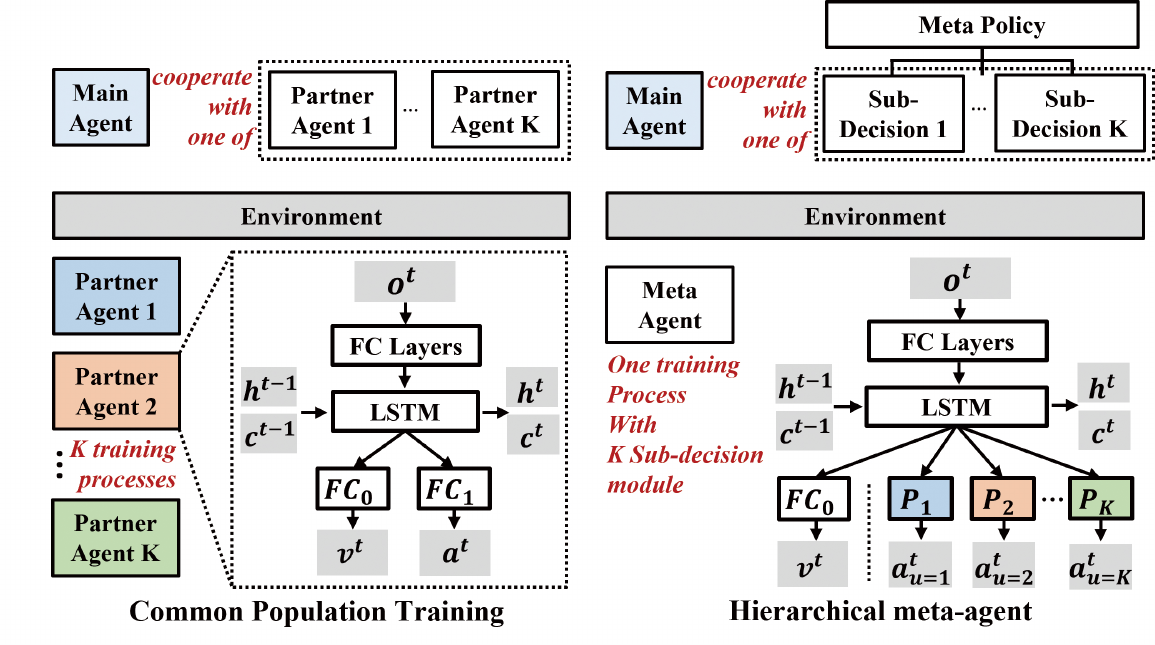}
    \caption{Comparison of common population and hierarchical meta-agent population}
    \label{fig:network}
\end{figure}

Based on the aforementioned idea, we design a hierarchical meta-agent—a simple yet effective architecture—to realize a scalable population. Fig.~\ref{fig:network} show the details of the proposed architecture. Here, $h$ and $c$ denote the hidden and cell states of the LSTM, respectively, and $o^t$ is the observation input from the environment. The variable $v^t$ represents the estimated state value. The submodules $P_{1}, P_{2}, ..., P_{K}$ are used to simulate different individuals in the population, Each of $P_{i}$'s output $a^t_{u=i}$ serves as the policy output for the $i$-th agent in the population, where $K$ denotes the population size.

As can be seen, the hierarchical architecture of the meta-agent greatly reduces the parameters that need to be optimized (K complete sets $\to$ 1 complete set + K subsets). In this way, one can simulate populations of arbitrary size simply by adjusting the number of subsets, without the need to independently train each agent from scratch. In our implementation, we simply define the $P_{i}$ as a two-layer fully connected network; however, this choice is not mandatory. More complex architectures could potentially yield further performance improvements, though this comes with higher computational cost.

\subsection{Conditional Mutual Information Maximization}
If all the agents in a population have similar policies, the best-response agent can only learn to cooperate with partners of one kind of policy, and in this way the advantage of population training disappears (See Ablation study section for details). Therefore, increasing the diversity of population (i.e. different agents in the population behave variously) has always been a key concern in this research area. It is noteworthy that a diverse population means different agents in the population act differently, and this can be achieved by making the meta-agent output distinct actions with different sub-decision modules given an observation and history trajectory. This operation can be formulated as maximizing the conditional mutual information:

\begin{equation}
    I(A;U|H) = \int\int\int p(a,u,h)\log\frac{p(h)p(a,u,h)}{p(u,h)p(a,h)}dudsda
\end{equation}

where $H$ represents observation input (containing current observation and historical trajectory used for decision), $U$ represents the index of sub-decision module that outputs action $A$, and $p(a,u,h)$ is the joint probability density function. Considering that reinforcement learning frameworks commonly estimate integrals using the Monte Carlo method, that is, sampling transitions from replay buffer to calculate, the unbiased estimation of mutual information $\hat{I}(A;U|H)$ can be written as:

\begin{equation}
	\label{eqa:basei}
	\begin{aligned}
		\hat{I}(A;U|H) 
		&= \frac{1}{N}\sum_{j=1}^N \log\frac{p(a_j|u_j,h_j)}{p(a_j|h_j)} \\
		&= \frac{1}{N}\sum_{j=1}^N \bigg[ \log p(a_j|u_j,h_j) \\
		& \qquad - \log \sum_{i=1}^K p(u_i|h_j) \cdot p(a_j|u_i,h_j) \bigg]
	\end{aligned}
\end{equation}
where $K$ is the total number of sub-decision modules, $N$ is the number of transitions, $a_j,u_j,h_j$ are action, sub-decision module index and observation input of the $j$-th transition, and $p(a_j|u_j,h_j)$ is the conditional probability that an agent takes action $a_j$ given $u_j,h_j$. For brevity, use $I_j$ to denote the $j$-th term in $\hat{I}(A;U|H)$. The equation is: $I_j := \log p(a_j|u_j,h_j) -\log \sum_{i=1}^K p(u_i|h_j)p(a_j|u_i,h_j)$.

Notably, directly maximizing $\hat{I}(A;U|H)$ with gradient-based methods is not a preferable choice for two reasons. Firstly, The posterior probability $p(u_i|h_j)$ is hard to calculate. Secondly, the gradient of $p(a_j|u_i,h_j)$ is almost always equal to zero for many RL policies. For example, value-based policies commonly output actions that maximize the action-value function $Q(a,u,h)$ (or use $\epsilon$-greedy for exploration). In this way, the value of $p(a_j|u_i,h_j)$ is determined only by whether $a_j = \arg\max_a Q(a,u_i,h_j)$. Consequently, the derivatives of $p(a_j|u_i,h_j)$ with respect to the neural network parameters are equal to zero. 

We propose to optimize an alternative objective $\bar{I}(A;U|H)$, which have the following two properties:
\begin{enumerate}
    \item $\bar{I}(A;U|H)$ provides gradients that can be used for neural network training, whether the meta-agent makes decisions by outputting action distributions or maximizing Q-functions.
    \item Increasing $\bar{I}(A;U|H)$ also increases $\hat{I}(A;U|H)$.
\end{enumerate}
The definition of $\bar{I}(A;U|H)$ is given below:
\begin{equation}
    \bar{I}(A;U|H) = -\frac{1}{N}\sum_{j=1}^N \sum_{i=1,i\neq j}^K F(u_i,h_j,a_j)
\end{equation}
where $F(u_i,h_j,a_j)$ represents the favor of the meta-agent for $a_j$ given $u_i,h_j$ and is required to be the direct output of a neural network so that its gradients can be used for gradient-based training. Below are several possible forms of $F(u_i,h_j,a_j)$:
\begin{enumerate}
    \item If the neural network used for decision directly outputs action distribution (common for policy-gradient-based methods such as PPO \citep{schulman2017proximal}), then $F(u_i,h_j,a_j)$ represents the probability of choosing $a_j$ given $u_i,h_j$, which is $p(a_j|u_i,h_j)$;
    \item If the neural network used for decision outputs advantage functions (or Q-values, common for value-based methods such as Dueling-DQN \citep{wang2016dueling}), then $F(u_i,h_j,a_j)=A(u_i,h_j,a_j)$ (or $Q(u_i,h_j,a_j)$). 
\end{enumerate}
Moreover, we provide certain theoretical guarantee for maximizing $\bar{I}(A;U|H)$. 

\begin{theorem}
\label{ter:1}
Given $F(u_j,h_j,a_j)$, if $F$ is update to $F'$ such that:

\begin{equation}
\label{eqa:condition}
\begin{split}
    \exists v\ \text{s.t.} \quad 
    & \arg\max_a F(u_v,h_j,a) = a_j \ \land \\
    & F'(u_v,h_j,a_j) < F(u_v,h_j,a_j) \ \land \\
    & \forall i \neq v,\ 
    F'(u_i,h_j,a_j) = F(u_i,h_j,a_j)
\end{split}
\end{equation}

then the corresponding term $I_j$ in $\hat{I}(A;U|H)$ is updated to $I'_j$ and satisfies $I'_j \geq I_j$. 
\end{theorem}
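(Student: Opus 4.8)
The plan is to rewrite $I_j$ in terms of exactly the quantities the update perturbs and then invoke monotonicity of the policy in $F$. Write $w_i:=p(u_i|h_j)\ge 0$ (so $\sum_i w_i=1$) and $q_i:=p(a_j|u_i,h_j)\in[0,1]$, and let $m$ be the index with $u_m=u_j$, so that $I_j=\log q_m-\log\sum_{i=1}^{K}w_i q_i$. I would read the hypothesis ``Given $F(u_j,h_j,a_j)$'' as saying the update holds $F$ fixed at the home triple $(u_j,h_j,a_j)$; combined with the second conjunct of \eqref{eqa:condition} this already forces $v\neq m$, and the update perturbs $F$ only at $(u_v,h_j,a_j)$, lowering it there. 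Hence $q'_i=q_i$ for every $i\neq v$ — in particular $q'_m=q_m$ — so $\log q_m$ in $I_j$ does not move, and it remains only to track $\sum_i w_i q_i$.

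The one substantive step is the claim that lowering $F(u_v,h_j,a_j)$, with all other outputs of $F$ held fixed, can only weakly decrease $q_v=p(a_j|u_v,h_j)$, i.e., $q'_v\le q_v$. If $F$ is a policy head this is immediate, since by construction $F(u_v,h_j,a_j)$ \emph{is} $p(a_j|u_v,h_j)$. If $F$ is an advantage/$Q$-head with a (possibly $\epsilon$-)greedy policy, then $q_v$ equals, up to the exploration floor, the indicator of $a_j\in\arg\max_a F(u_v,h_j,a)$; since $a_j$ attains that $\arg\max$ under $F$ by the first conjunct of \eqref{eqa:condition}, after the decrease $a_j$ is either still a maximizer, whence $q'_v=q_v$, or it is not, whence $q'_v<q_v$ (it falls to the floor, i.e., to $0$ in the pure-greedy case). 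I expect this case analysis — plus a remark on ties, when the decrease merely makes $a_j$ tie for the $\arg\max$ under $F'$ — to be the main, if mild, obstacle, precisely because one cannot differentiate through the step map $F\mapsto p$ in the value-based case, the very asymmetry that motivated optimizing $\bar I$ rather than $\hat I$.

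Finally I would assemble the pieces. Only the $v$-th summand of $\sum_i w_i q_i$ changes, by $w_v(q'_v-q_v)\le 0$, so $\sum_i w_i q'_i\le\sum_i w_i q_i$; moreover $\sum_i w_i q'_i\ge w_m q'_m=w_m q_m>0$ whenever $q_m>0$ (automatic under $\epsilon$-greedy, while if $q_m=0$ then $I_j=-\infty$ and the inequality is vacuous). Since $-\log$ is decreasing, $-\log\sum_i w_i q'_i\ge-\log\sum_i w_i q_i$, and combining with $\log q'_m=\log q_m$ gives
\[
 I'_j=\log q'_m-\log\sum_i w_i q'_i\ \ge\ \log q_m-\log\sum_i w_i q_i=I_j ,
\]
which is the assertion; the inequality is strict exactly when $w_v>0$ and the decrease removes $a_j$ from $\arg\max_a F(u_v,h_j,a)$ (and always, in the policy-head case).
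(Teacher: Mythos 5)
Your proof is correct and follows essentially the same route as the paper's: the numerator term $\log p(a_j|u_j,h_j)$ is untouched, the update only (weakly) lowers $p(a_j|u_v,h_j)$ inside the mixture $\sum_{i} p(u_i|h_j)\,p(a_j|u_i,h_j)$ --- immediately in the policy-head case where $F=p$, and via the greedy/$\epsilon$-greedy indicator case analysis in the advantage/$Q$-head case --- so monotonicity of $-\log$ gives $I'_j \geq I_j$. You additionally make explicit a few points the paper leaves implicit (that $v$ cannot be the index of $u_j$, the treatment of ties in the $\arg\max$, and the degenerate case $p(a_j|u_j,h_j)=0$), which strengthens the write-up but does not change the argument.
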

The proof is presented in the Appendix. 

Computing $\bar{I}(A;U|H)$ requires obtaining all agents’ actions under the same observation, which is efficiently achieved via the meta-agent with a single forward pass—unlike standard population training that needs $N$ slow LSTM passes. Notably, ScaPT’s two components are complementary: the meta-agent enables efficient mutual information computation and training, while the conditional mutual information term guides the learning of a diverse population.

\subsection{Instantiation}
ScaPT only specifies how to build an efficient and diverse population, and is compatible with multiple base RL frameworks that optimize agent policies. Our instantiation is based on an value-based approach because it is confirmed that this kind of method is suitable for our experimental task Hanabi \citep{hu2019simplified,bard2020hanabi}. The main agent only needs to cooperate well with all the agents in the population (which are realized using the partner meta-agent), and thus it is required to minimize the base TD-error \citep{van2016deep}: 
\begin{equation}
\label{eqa:mainloss}
\begin{aligned}
L_{m} &= \frac{1}{N}\sum_{j=1}^N r_j + \gamma \max_a Q_{\theta_m'}(h'_j,u_j,a) \\
    & - Q_{\theta_m}(h_j,u_j,a_j)
\end{aligned}
\end{equation}
where $\theta_m$ and $\theta'_m$ represent the parameters in the online Q-net and target Q-net of the main agent respectively. Notably, the main agent has the same neural network architecture as a normal agent shown in Fig.~\ref{fig:network}. In comparison, the meta-agent not only needs to learn coordination, but also needs to become diverse by maximizing $\bar{I}(A;U|H)$. Consequently, $\bar{I}(A;U|H)$ is added to the base TD-loss with a weight $\alpha$, which controls the balance between cooperation ability and population diversity:
\begin{equation}
\label{eqa:final_loss}
\begin{aligned}
L_{p} &= \frac{1}{N}\sum_{j=1}^N \biggl[
r_j + \gamma \max_a Q_{\theta_p'}(u_j,h'_j,a) \\
& - Q_{\theta_p}(u_j,h_j,a_j) + \alpha \sum_{\substack{i=1 \\ i \neq j}}^K Q_{\theta_p}(u_i,h_j,a_j) \biggr]
\end{aligned}
\end{equation}
where $\theta_p$ represent the parameters in the online Q-net of the partner meta-agent. To accelerate convergence, prioritized replay \citep{schaul2015prioritized} and dueling-net \citep{wang2016dueling} is also utilized. 

\begin{table*}[t]
  \centering
  \setlength{\tabcolsep}{2mm}  
  \fontsize{9}{11}\selectfont
  \begin{tabular}{l l l l}
    \toprule
    \textbf{Index} & \textbf{Act Group} & \textbf{Objective for $\pi_m$} & \textbf{Objective for $\pi_{pi}$} \\
    \midrule
    I   & $MP$            & $\sum_{i=1}^N J(\pi_m,\pi_{pi})$                        & $J(\pi_m,\pi_{pi})$ \\
    II  & $MM$, $MP$       & $J(\pi_m,\pi_m) + \sum_{i=1}^N J(\pi_m,\pi_{pi})$       & $J(\pi_m,\pi_{pi})$ \\
    III & $MP$, $PP$       & $\sum_{i=1}^N J(\pi_m,\pi_{pi})$                        & $J(\pi_{pi},\pi_{pi})$ \\
    IV  & $MM$, $MP$, $PP$ & $J(\pi_m,\pi_m) + \sum_{i=1}^N J(\pi_m,\pi_{pi})$       & $J(\pi_{pi},\pi_{pi})$ \\
    V   & $MP$, $PP$       & $\sum_{i=1}^N J(\pi_m,\pi_{pi})$                        & $J(\pi_{pi},\pi_{pi}) + J(\pi_m,\pi_{pi})$ \\
    VI  & $MM$, $MP$, $PP$ & $J(\pi_m,\pi_m) + \sum_{i=1}^N J(\pi_m,\pi_{pi})$       & $J(\pi_{pi},\pi_{pi}) + J(\pi_m,\pi_{pi})$ \\
    \bottomrule
  \end{tabular}
  \caption{Feasible population training modes}
  \label{tab:1}
\end{table*}

Another key component of instantiation is the training mode, which specifies the pairs of agents interacting with the environment and the corresponding transitions for training. Table~\ref{tab:1} summarizes six feasible modes, with details in the Appendix. Take Mode-III as an example: it has act groups $MP,PP$, meaning agents generate transitions in two groups: [main agent, partner agent] and [partner agent, partner agent]. The training objectives require the main agent to cooperate well with the partner, while the partner only optimizes self-play scores without adapting. Each mode emphasizes different aspects, and we investigate their performance experimentally.

\section{Experiments}

\subsection{Matrix game: Task Complexity vs. Population Size}

To investigate how population size influences the ZSC performance of the main agent, we conducted comparative evaluations using collaborative single-step matrix games \citep{lupu2021trajectory}. In each game, player 1 selects a row and player 2 selects a column independently. After both players have made their choices, the selected actions are revealed, and the agents receive the reward corresponding to the intersection of the chosen row and column. Specifically, we compared our approach against representative baselines, including \textbf{ET3} \citep{yan2024efficient}, \textbf{MEP} \citep{zhao2023maximum}, and \textbf{TrajeDi} \citep{lupu2021trajectory}, \textbf{Individual agents}, where agents are trained independently, and \textbf{baseline} \citep{lupu2021trajectory}. These methods cover both population-based and non-population-based paradigms, providing both a comparison across different algorithmic architectures and an in-depth view of how population size impacts the ZSC performance of population-based methods.

Fig.~\ref{fig:M_50_2} presents the experimental results for various baseline algorithms under a matrix dimension of 50 and a fixed population size of 2. It can be observed that while all methods achieved strong performance in the self-play setting, their scores dropped significantly in the cross-play evaluation, indicating a lack of generalization between agents. Notably, ScaPT retained a non-negligible degree of ZSC capability, outperforming other baselines in this regard.

As illustrated in Fig.~\ref{fig:M_50_30}, when the population size was increased, ScaPT’s performance in cross-play improved markedly, demonstrating a strong resurgence in ZSC ability. This suggests that scaling the population size can enhance the diversity and representational capacity of the training process, thereby equipping the main agent with more generalized coordination strategies. The appendix includes examples of matrices with varying dimensions as well as the remaining experimental results.

\begin{figure}[t]
  \centering
  \begin{subfigure}[t]{1\columnwidth}
    \centering
    \includegraphics[width=\textwidth]{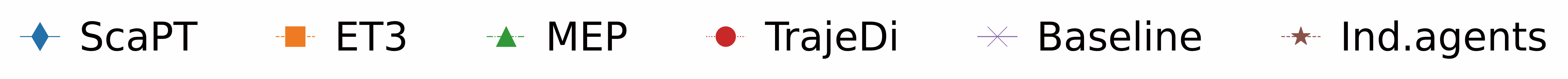}
  \end{subfigure}

  \begin{subfigure}[t]{0.98\columnwidth}
    \centering
    \includegraphics[width=\textwidth]{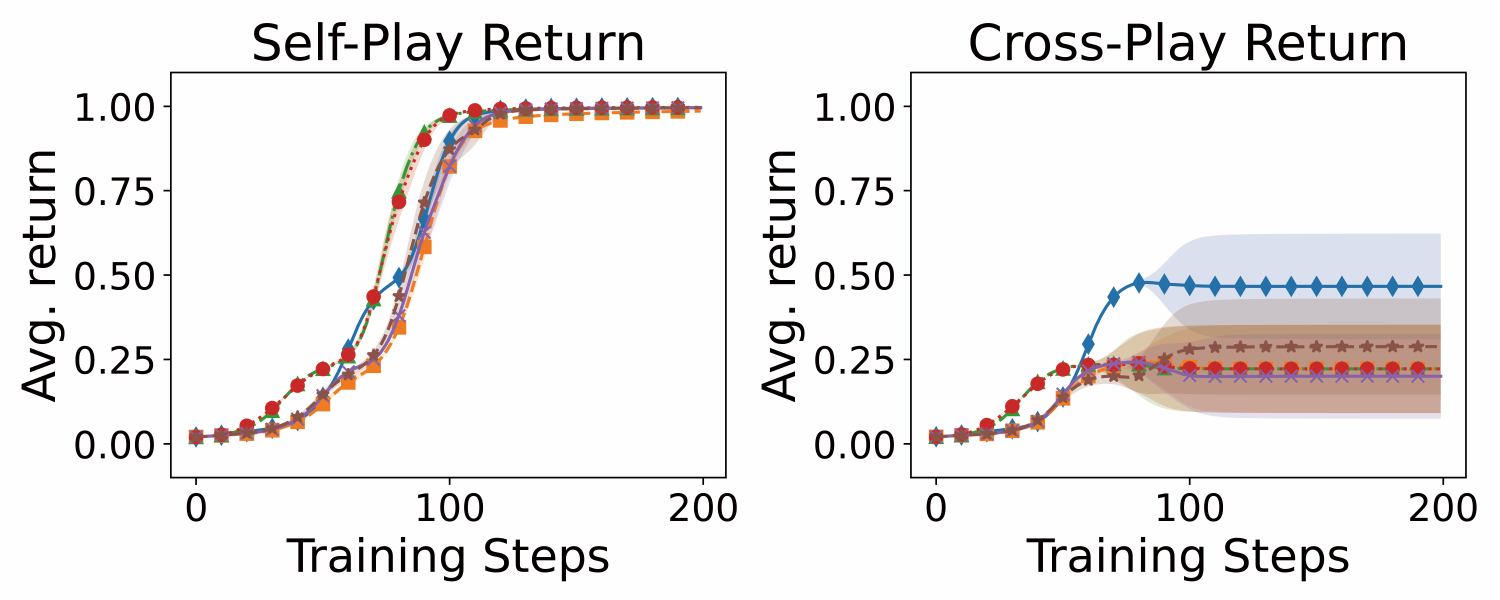}
    \caption{Matrix dimension = 50, population size = 2}
    \label{fig:M_50_2}
  \end{subfigure}
  
  \begin{subfigure}[t]{0.98\columnwidth}
    \centering
    \includegraphics[width=\textwidth]{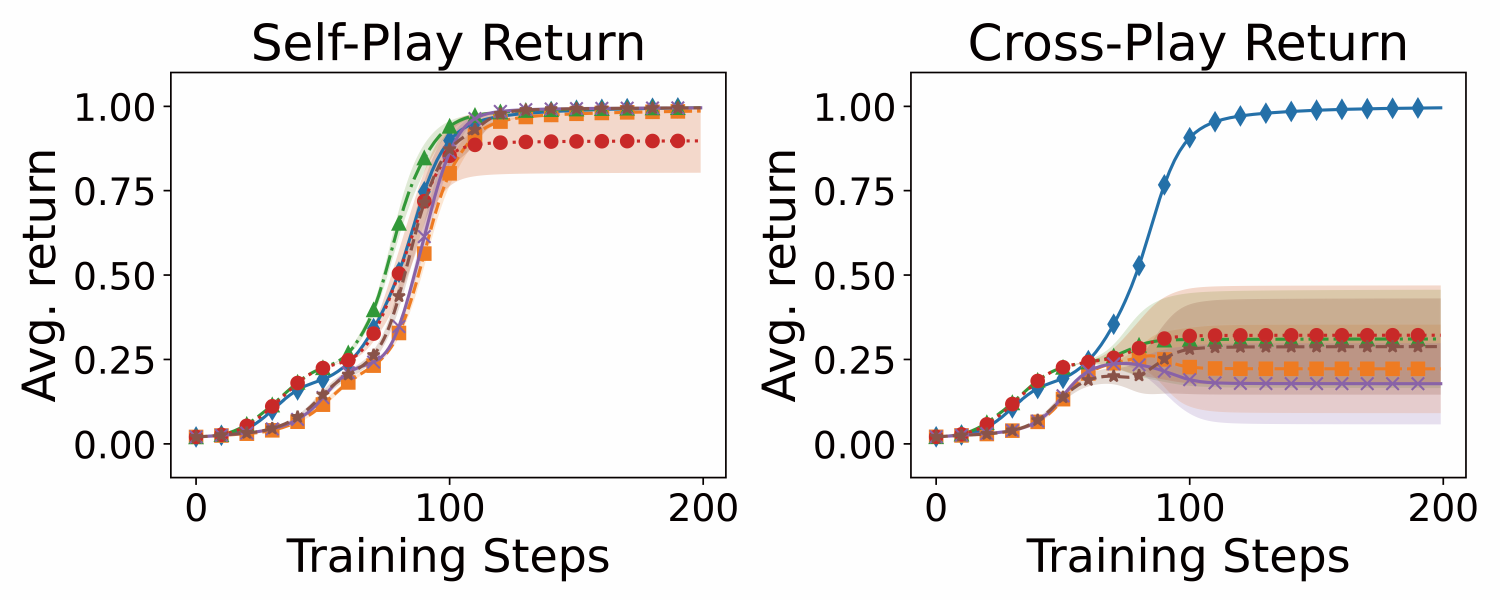}
    \caption{Matrix dimension = 50, population size = 30}
    \label{fig:M_50_30}
  \end{subfigure}
  
  \caption{Results for matrix dimension = 50 with different population sizes}
  \label{fig:M_50}

\end{figure}

\subsection{ScaPT in Hanabi}

We conducted experiments on Hanabi \citep{bard2020hanabi}, a cooperative card game in which players must play cards in order by color and rank. Players can only see their teammates' cards, not their own, and must infer intentions and communicate through limited actions. As a result, self-play agents perform well due to shared training, but collaboration with unfamiliar agents is challenging, making Hanabi a standard benchmark for zero-shot coordination.

\subsection{Compare methods and Evaluation Criteria}
\label{sec:criteria}
We compare our method with five representative methods, including one self-play methods: \textbf{SP} — self-play training with parameter sharing, serving as a baseline; two non-population-based algorithms designed for ZSC: \textbf{OP} \citep{hu2020other}, \textbf{OBL} \citep{hu2021off}; and two population-based ZSC algorithms, \textbf{TrajeDi} \citep{lupu2021trajectory} and \textbf{MEP} \citep{zhao2023maximum}.

Zero-shot coordination requires agents to cooperate well with collaborators that have not been seen before, that is, ``strangers''. The corresponding evaluation criteria are slightly different from those of normal MARL. Below we give a formulaic representation in two-agent scenarios. 

Use $J(\pi_1,\pi_2)$ to denote the expected cumulative discounted return obtained by the collaboration of $\pi_1$ and $\pi_2$. Use $\pi^M_i$ to denote the policy obtained with training framework $M$ and random seed $i$. The earliest metric to evaluate zero-shot cooperation performance is intra-algorithm cross-play (abbreviated as Intra-XP) score \citep{hu2020other}:
\begin{equation}
    S_{intra-XP}(M) = \mathbb{E}[J(\pi^M_i,\pi^M_j)|i\neq j]
\end{equation}
Cross-play is a commonly used metric that evaluates agent's ZSC performance with partner agents trained using the same algorithm but different random seeds. However, since all agents share the same architecture, the test agent may retain implicit prior knowledge about its partners. Thus, relying solely on cross-play does not provide a rigorous assessment of zero-shot coordination.

In order to address the deficiencies of the aforementioned metric, \citet{lucas2022any} propose one-sided zero-shot coordination (abbreviated as 1ZSC-XP) score:
\begin{equation}
    S_{1ZSC-XP}(M) = \mathbb{E}[J(\pi^M,\pi^{M_t})]
\end{equation}
where $M_t$ refers to a set of algorithms that are not specially designed for zero-shot coordination. The shortcoming of this criterion is that $\pi^{M_t}$ still cannot represent all feasible ``strangers'', and the results may be biased. 

Neither of the metrics is perfect, hence the following sections display the above two metrics for more comprehensive evaluation.

To enhance the credibility of the evaluation, we train models with several fixed random seeds(time consumption: 0-4 (2-player), 1-4 (5-player)) under each experiments, and test Intra-XP and 1ZSC-XP scores. Specifically, 1ZSC-XP scores are obtained by pairing the tested zero-shot coordination agents with 40 non-ZSC agents obtained with four kinds of self-play frameworks: IQL \citep{tan1993multi}, VDN \citep{sunehag2018value}, SAD and SAD+AUX \citep{hu2019simplified}. 

\subsection{Experiments with different framework under Resource Constraints}

Firstly, in Hanabi, we evaluate each algorithm within its native framework under a strict resource constraints (128GB RAM and 48GB GPU memory), reflecting typical small-scale research servers. 

We compare our method only with population methods designed for ZSC: TrajeDi and MEP on 5-player game. Table~\ref{tab:tradeoff} shows the performance upper bounds of different methods under constrained RAM and GPU memory. Specifically, under these limits, TrajeDi and MEP can only support a population size of 1 due to the need to allocate separate replay buffers for each agent. In contrast, ScaPT achieves the best performance for both metrics by flexibly scaling the population size to 5 and 8.

\begin{table}[t]
  \centering
  \setlength{\tabcolsep}{2mm}
  \renewcommand{\arraystretch}{1.1}
  \fontsize{9}{11}\selectfont
  \begin{tabular}{lcccc}
    \toprule
    & Metrics & TrajeDi & MEP & ScaPT \\
    \midrule
    \multirow{2}{*}
      & Intra-XP   & 4.17$\pm$1.34 & 5.13$\pm$2.11 & \textbf{11.07$\pm$1.65} \\
      & 1ZSC-XP    & 4.22$\pm$1.47 & 6.69$\pm$1.62 & \textbf{11.39$\pm$1.60} \\
    \midrule
  \end{tabular}
  \caption{Best performance under Resource Constraints in 5-player hanabi game}
  \label{tab:tradeoff}
\end{table}

To validate our archictecture and ensure fair comparison, we evaluate MEP under common and scalable population architectures in computational cost, training time, and ZSC performance on 2-player Hanabi. As shown in Fig.~\ref{fig:comparision}, the scalable population scales more efficiently with slower growth in resource use and training time, while the common framework grows approximately linearly. Both approaches achieve comparable ZSC performance, demonstrating the scalability and effectiveness of our method for complex tasks requiring large population agents.

\begin{table*}[t]
  \centering
  \setlength{\tabcolsep}{2mm} 
  \renewcommand{\arraystretch}{1.1} 
  \fontsize{9}{11}\selectfont 
  \begin{tabular}{lcccccc}
    \toprule
    & SP & OP & OBL & TrajeDi* & MEP* & ScaPT \\
    \midrule
    Intra-XP   & 3.96$\pm$0.49 & 14.94$\pm$0.67 & \textbf{23.80$\pm$0.03} & 15.95$\pm$2.47 & 20.09$\pm$0.69 & 20.77$\pm$0.40 \\
    1ZSC-XP    & 7.68$\pm$0.39 & 13.48$\pm$0.19 & 3.80$\pm$0.07 & 14.13$\pm$3.20 & 15.64$\pm$3.02 & \textbf{15.95$\pm$3.05} \\
    \bottomrule
  \end{tabular}
  \caption{Zero-shot coordination performance of different algorithms on the 2-player Hanabi game (population-based methods: TrajeDi, MEP, ScaPT, with population size 5). Algorithms marked * are implemented under the proposed scalable framework.}
  \label{tab:main}
  
\end{table*}

\begin{figure}[t]
    \centering
    \includegraphics[width=\columnwidth]{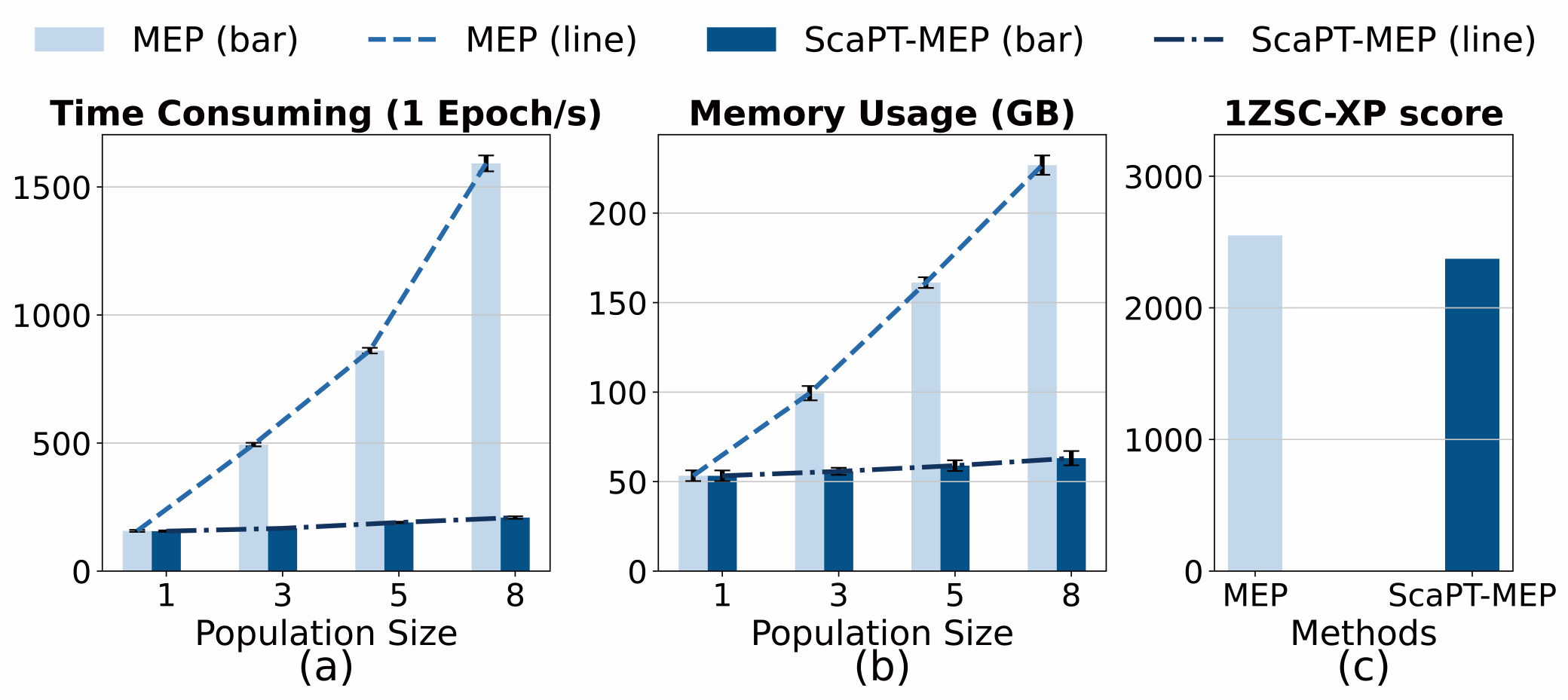}
    \caption{Comparison of common population and meta-agent population: (a) training time consuming with population size increase; (b) resource consumption with population size increase; (c) sum of 1ZSC-XP scores over 40 experiments with different architectures using the MEP method. }
    \label{fig:comparision}
\end{figure}

\begin{figure}[t]
    \centering
    \includegraphics[width=\columnwidth]{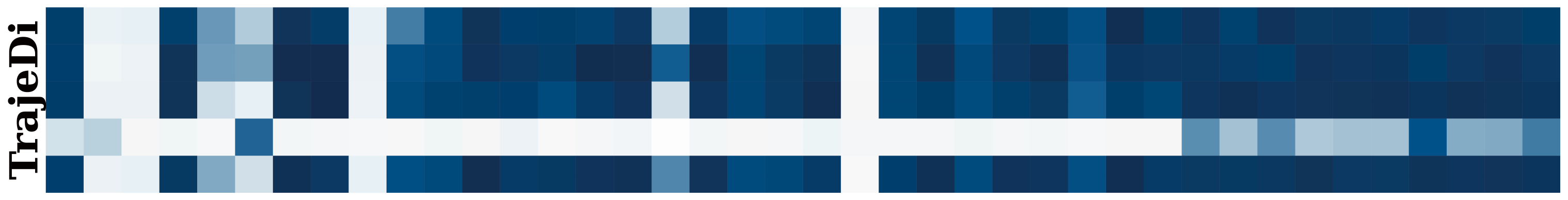}
    \includegraphics[width=\columnwidth]{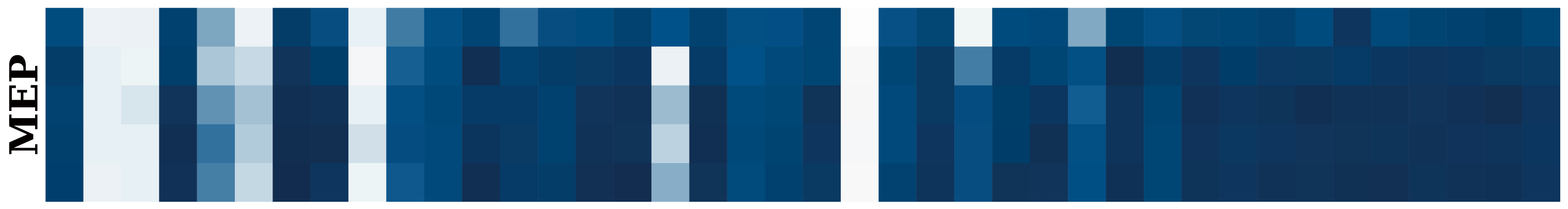}
    \includegraphics[width=\columnwidth]{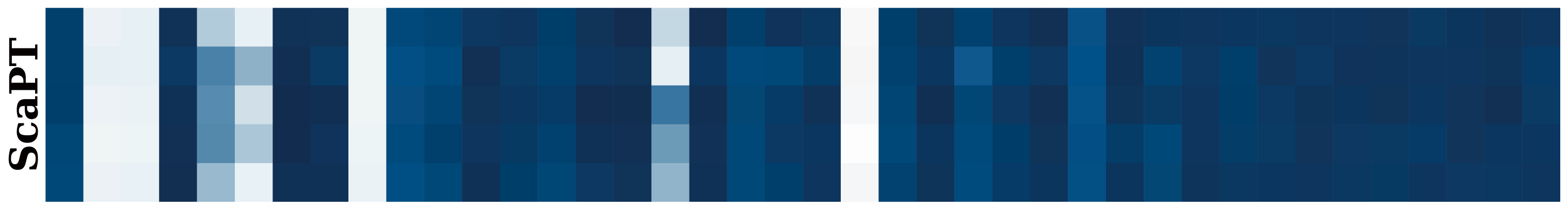}
    \caption{Detailed pair-wise 1ZSC-XP scores of TrajeDi, MEP and CMIMP. Deeper colors represent higher scores and each row represents the coordination scores of testing a main agent pairing with 40 non-ZSC agent, thus forming a $5\times40$ heat-map.}
    \label{fig:detailed}
\end{figure}

\subsection{Experiments with scalable population framework}

We conduct a 2-player Hanabi experiment with population size 5 for all population-based algorithms. Table~\ref{tab:main} reports mean and standard error of evaluation metrics. ScaPT achieves the highest 1ZSC-XP, demonstrating strong coordination with 40 unseen partners and ranks top on Intra-XP except for OBL. Although OBL scores high on Intra-XP, its 1ZSC-XP score is only 3.8, suggesting that OBL lacks genuine zero-shot coordination capability since it performs well mainly when cooperating with agents that are trained using the same algorithm as itself. Considering both metrics together, all population-based methods demonstrate solid performance. Among them, ScaPT delivers the most outstanding results, further confirming its superiority.

Besides, Fig.~\ref{fig:detailed} visualizes the detailed pair-wise 1ZSC-XP cooperation scores of Table~\ref{tab:main} using a heat-map. The heat-maps of three compared population-based methods are presented along for reference, and the heat-maps of other frameworks are presented in the Appendix. In each sub-figure, each row represents the coordination scores of a testing main agent pairing with all non-ZSC agents, and deeper colors represent higher scores. Different rows in a sub-figure correspond to agents trained with different random number seeds under the same framework. There are two phenomena worth noting: Firstly, the differences between columns are consistent for these two frameworks. The reason is that some non-ZSC agents are relatively easy to cooperate with (e.g. column 7,8), while some others are not (e.g. column 2,3). Secondly, the performance stability of ScaPT is good: 1ZSC-XP scores of ScaPT models with different random seeds vary little, while TrajeDi and MEP demonstrate certain instability.

To validate the relationship between population size and ZSC performance, we conducted experiments on the Hanabi benchmark with all the population-based algorithms. By increasing the number of players from 2 to 5, we raised task difficulty to evaluate algorithm performance, while also varying population size to assess its effect on ZSC outcomes.

\begin{table}[t]
  \centering
  \setlength{\tabcolsep}{2mm}
  \renewcommand{\arraystretch}{1.1}
  \fontsize{9}{11}\selectfont
  \begin{tabular}{lcccc}
    \toprule
    ps & Metrics & TrajeDi* & MEP* & ScaPT \\
    \midrule
    \multirow{2}{*}{ps=2} 
      & Intra-XP   & 6.08$\pm$1.86 & 8.13$\pm$2.11 & 7.88$\pm$1.72 \\
      & 1ZSC-XP    & 3.71$\pm$0.60 & 7.08$\pm$1.62 & 6.18$\pm$1.53 \\
    \midrule
    \multirow{2}{*}{ps=5} 
      & Intra-XP   & 9.91$\pm$1.48 & 9.51$\pm$1.64 & \textbf{11.07$\pm$1.65} \\
      & 1ZSC-XP    & 8.38$\pm$0.75 & 8.84$\pm$0.93 & 11.35$\pm$0.58 \\
    \midrule
    \multirow{2}{*}{ps=8} 
      & Intra-XP   & 9.04$\pm$0.44 & 10.98$\pm$0.33 & 10.61$\pm$0.93 \\
      & 1ZSC-XP    & 9.86$\pm$1.73 & 10.48$\pm$1.60 & \textbf{11.39$\pm$1.60} \\
    \midrule
    \multirow{2}{*}{ps=15} 
      & Intra-XP   & 8.98$\pm$0.87 & 6.78$\pm$0.37 & 8.40$\pm$0.83 \\
      & 1ZSC-XP    & 9.01$\pm$1.65 & 8.25$\pm$1.09 & 9.10$\pm$1.81 \\
    \bottomrule
  \end{tabular}
  \caption{ZSC performance of population-based algorithms on 5-player Hanabi, under the scalable framework with different population sizes (PS).}
  \label{tab:psvsper}
\end{table}

\begin{table}[t]
  \centering
  \small  
  \setlength{\tabcolsep}{2mm}  
  \begin{tabular}{c cccccc}
    \toprule
    & I & II & III & IV & V & VI \\
    \midrule
    Intra-XP   & 1.16  & \textbf{20.77} & 8.93  & 12.43 & 12.07 & 19.22 \\
    1ZSC-XP    & 5.63  & \textbf{15.95} & 10.93 & 12.59 & 12.41 & 15.09 \\
    \bottomrule
  \end{tabular}
  \caption{Zero-shot coordination performance under different training modes (mean values only).}
  \label{tab:modes}
\end{table}

As shown in Table~\ref{tab:psvsper}, increasing the number of players from 2 to 5, which raises task complexity, causes all algorithms to experience performance changes. TrajeDi suffers the largest drop, while MEP and ScaPT maintain relatively better performance. Overall, MEP performs strongest, with ScaPT showing comparable but slightly lower scores and more stable results across random seeds. Increasing the population size from 2 to 5 improves performance for all methods considerably—specifically, the 1ZSC-XP metric improves by 3.83 for TrajeDi, 1.38 for MEP, and 3.19 for ScaPT. As the population size increases from 5 to 8, the overall performance of all methods continues to improve, though with reduced marginal gains. However, a further increase to 15 results in performance degradation, indicating the presence of algorithm-specific optimal population sizes—an aspect that merits deeper exploration. ScaPT achieves the best overall results on both metrics, demonstrating strong generalization and underscoring the critical importance of population size for ZSC.

\subsection{Ablation Study}
\label{sec:abl}
As stated in equation~\ref{eqa:final_loss}, the training objective includes a weighted mutual information term that encourages behavioral differences among sub-modules. To examine its impact, Fig. 7 plots four key metrics across training epochs in 2-player hanabi environment under different training modes.

\textbf{MM Score}: The self-play score of the main agent; 

\textbf{MP Score}: The cooperation score of the main agent and the partner agent, one of the key objectives of population training;

\textbf{PP Score}: The self-play score of the partner agents;

\textbf{Diff Prob}: The probability that different agents select the same action given the same observation—lower values indicate higher population diversity.

\begin{figure}[ht]
    \centering
    \begin{minipage}[t]{0.98\columnwidth}
    \centering
    \includegraphics[width=\textwidth]{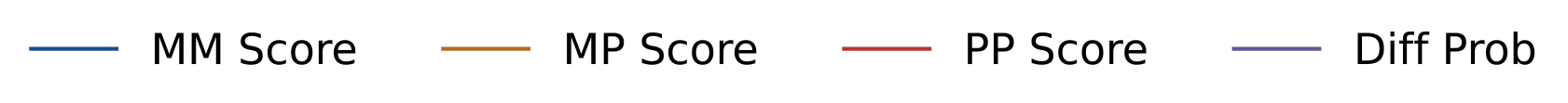}
    \end{minipage}
    
    \begin{minipage}[t]{0.48\columnwidth}
    \centering
    \includegraphics[width=\textwidth]{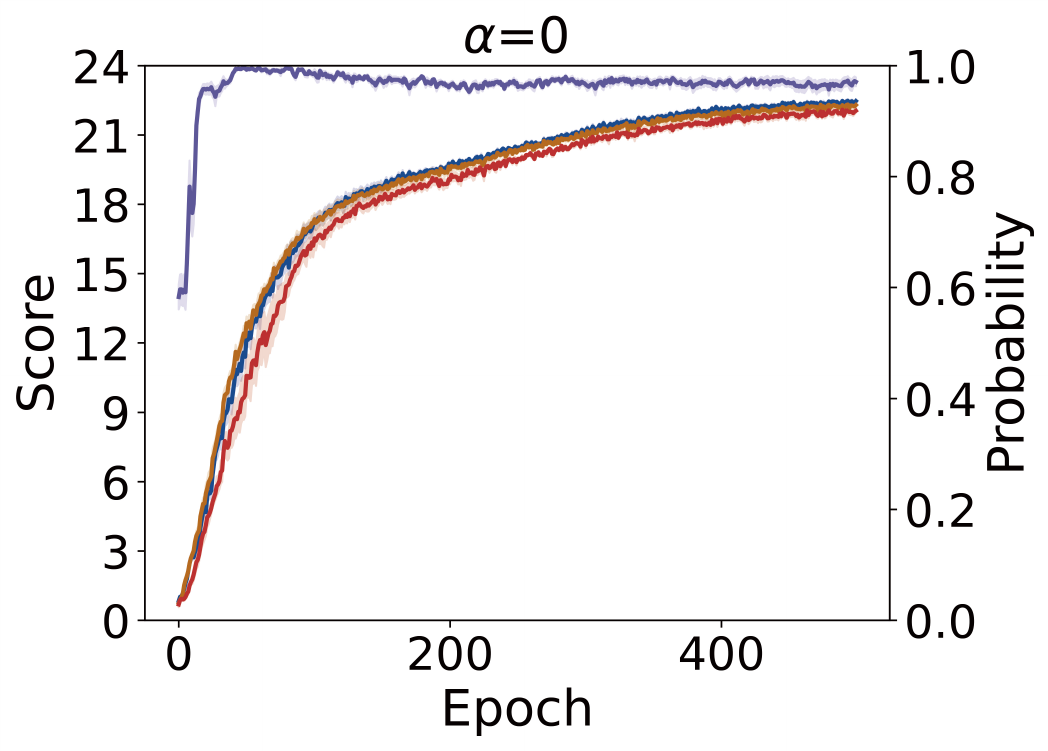}
    \centerline{Intra-XP: 13.44$\pm$2.02}
    \centerline{1ZSC-XP: 13.08$\pm$0.33}
    \end{minipage}
    \hfill
    \begin{minipage}[t]{0.48\columnwidth}
    \centering
    \includegraphics[width=\textwidth]{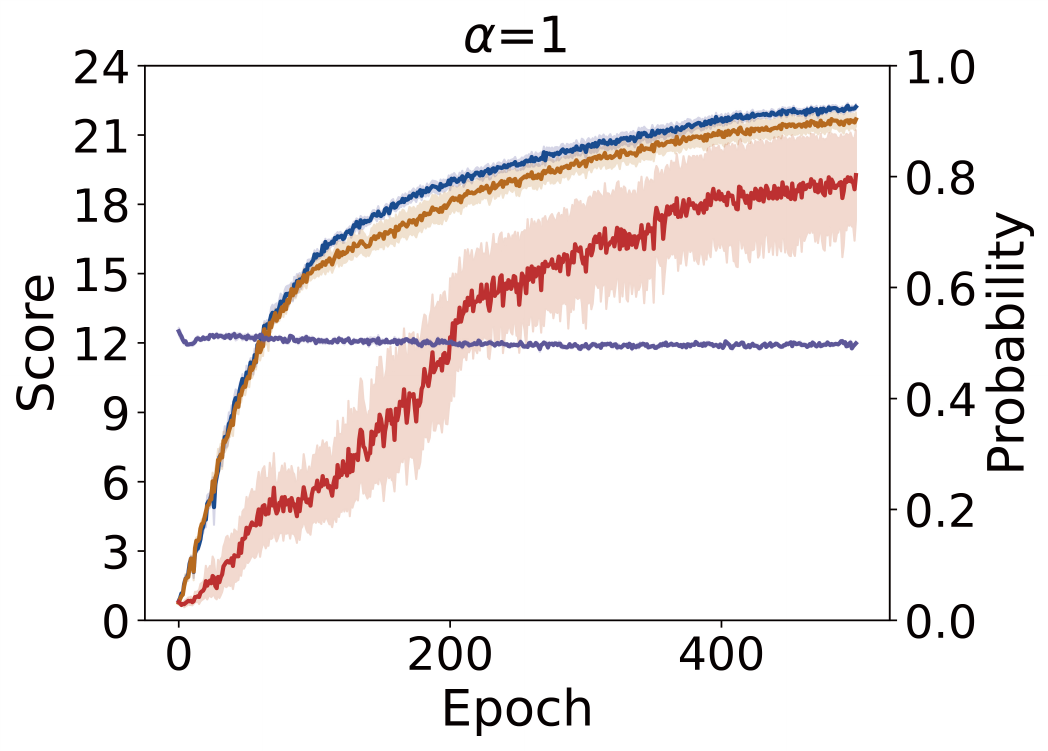}
    \centerline{Intra-XP: 21.05$\pm$0.05}
    \centerline{1ZSC-XP: 15.73$\pm$0.03}
    \end{minipage}

    \begin{minipage}[t]{0.48\columnwidth}
    \centering
    \includegraphics[width=\textwidth]{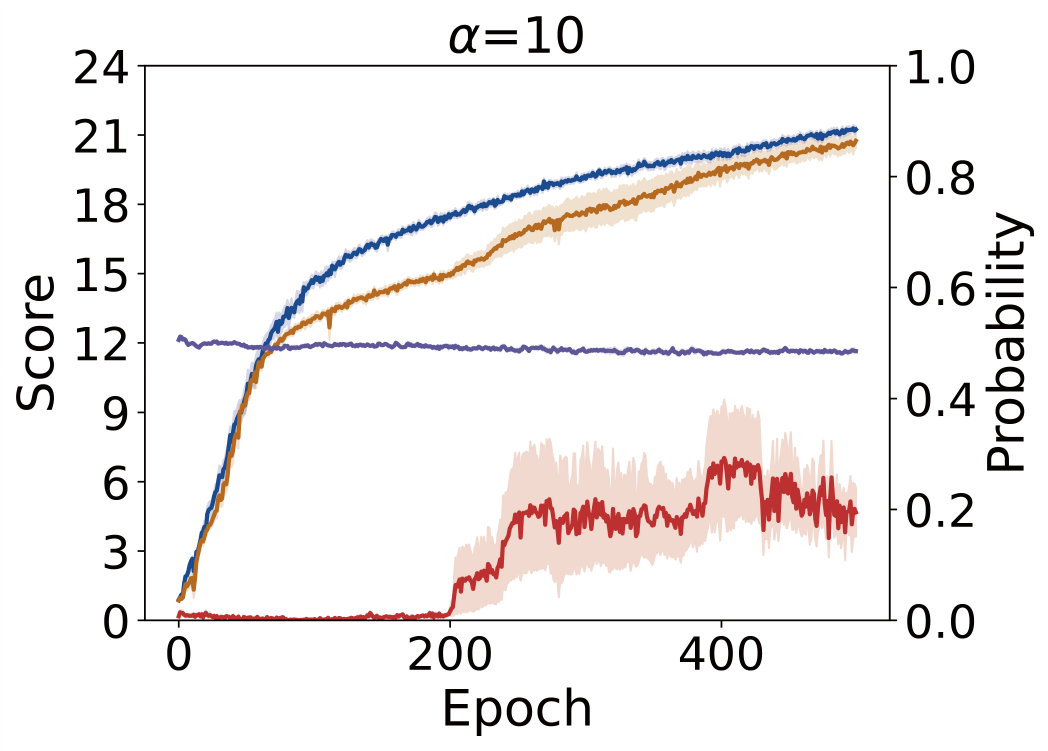}
    \centerline{Intra-XP: 12.88$\pm$1.06}
    \centerline{1ZSC-XP: 10.68$\pm$0.27}
    \end{minipage}

    \caption{Training curves and testing scores of ScaPT with different $\alpha$.}
    \label{fig:weight}
\end{figure}

As is shown in Fig.~\ref{fig:weight}, when the mutual information term $\bar{I}(A;U|H)$ is ignored (i.e. $\alpha=0$), \textbf{Diff Prob} quickly rises to $1$, meaning that the partner agents in the population act similarly. As a result, the generalization performance of the main agent is reduced (low XP scores during testing) despite that the training process goes smoothly (high MM/MP/PP scores during training). When $\alpha$ is set to a proper value ($\alpha=1$),  \textbf{Diff Prob} maintains relatively low, indicating a diverse population. A larger $\alpha$ ($\alpha=10$) does not further increase diversity and reduces partner self-play performance (PP score), suggesting that excessive weight can harm rationality and coordination. To sum up, $\bar{I}(A;U|H)$ can help build a diverse population and thereby improve the zero-shot coordination, while assigning it a too large weight might have negative effect.

\subsection{Comparison of Different Training Modes}
\label{sec:modes_exp}
Table~\ref{tab:1} introduces six feasible training modes for population-based training. Then which mode is the best? Tab.~\ref{tab:modes} confirms that zero-shot coordination performance of different training modes varies a lot, and such differences are brought by the settings of secondary objectives. Mode-I is the worst, indicating that only optimizing the primary objective ($J(\pi_m,\pi_p)$) is not enough. Mode-II is the best, confirming the necessity of adding the self-play objective for the main agent. Notably, Mode-IV and Mode-VI additionally require increasing self-play scores for partner agents on the basis of Mode-II, and this operation is of no benefit judging from the results. The training curves for the different modes are provided in the appendix for reference.

\section{Conclusion}

In this paper, we analyzed limitations of population-based 
RL methods for ZSC, which struggle to scale efficiently enough to represent the diversity of evolving games, and proposed a population-size-free RL training framework. Experiments demonstrated its effectiveness and revealed a strong correlation between population size and ZSC performance, further validating the significance of our study.

\clearpage
\appendix

\begin{center}
    {\LARGE\bfseries Appendix}
\end{center}
\section{A. Proof of Theorem 1}
\renewcommand{\theequation}{A.\arabic{equation}}
\setcounter{equation}{0}
Here, we restate our \textbf{Theorem 1}, where $u_j$, $h_j$ and $a_j$ denote the $j-th$ sub-decision module's sub-decision module index, observation and action, respectively.
\begin{theorem}
\label{ter:1}
Given $F(u_j,h_j,a_j)$, if $F$ is update to $F'$ such that:
\begin{equation}
\label{eqa:condition}
\begin{split}
    \exists v\ \text{s.t.} \quad 
    & \arg\max_a F(u_v,h_j,a) = a_j \ \land \\
    & F'(u_v,h_j,a_j) < F(u_v,h_j,a_j) \ \land \\
    & \forall i \neq v,\ 
    F'(u_i,h_j,a_j) = F(u_i,h_j,a_j)
\end{split}
\end{equation}

then the corresponding term $I_j$ in $\hat{I}(A;U|H)$ is updated to $I'_j$ and satisfies $I'_j \geq I_j$. 
\end{theorem}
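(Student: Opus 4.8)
The plan is to analyze the single term $I_j = \log p(a_j \mid u_j, h_j) - \log \sum_{i=1}^K p(u_i \mid h_j)\, p(a_j \mid u_i, h_j)$ and track how it changes when $F \to F'$ under the stated conditions. The key observation is that the conditions only alter the quantities associated with one particular index $v$: the ``favor'' $F(u_v, h_j, a_j)$ is strictly decreased while all other $F(u_i, h_j, a_j)$ for $i \neq v$ are held fixed. I would first translate the change in $F$ into a change in the induced probabilities $p(a_j \mid u_i, h_j)$. For the action-distribution case this is immediate ($F = p$); for the value-based case one argues that since $a_j = \arg\max_a F(u_v, h_j, a)$ before the update, the policy $p(\cdot \mid u_v, h_j)$ (greedy or $\epsilon$-greedy in $F$) places maximal mass on $a_j$, and decreasing $F(u_v, h_j, a_j)$ can only (weakly) decrease $p(a_j \mid u_v, h_j)$ — and does not increase it. In either reading, the net effect is: $p(a_j \mid u_v, h_j)$ does not increase, and $p(a_j \mid u_i, h_j)$ is unchanged for $i \neq v$.

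Next I would split $I_j$ into the cases $u_j = v$ and $u_j \neq v$. The denominator term $D_j := \sum_{i=1}^K p(u_i \mid h_j)\, p(a_j \mid u_i, h_j)$ is a convex combination; since exactly one of its summands (the $i = v$ one, weighted by $p(u_v \mid h_j) \geq 0$) weakly decreases and the rest are fixed, $D_j$ weakly decreases, so $-\log D_j$ weakly increases. When $u_j \neq v$, the numerator $\log p(a_j \mid u_j, h_j)$ is untouched, so $I'_j \geq I_j$ follows at once. The delicate case is $u_j = v$: here the numerator $\log p(a_j \mid u_v, h_j)$ also (weakly) decreases, so one must show the decrease in the denominator dominates. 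Writing $p' = p(a_j \mid u_v, h_j)$ after the update and $p = $ before, with $w = p(u_v \mid h_j)$, the change in $I_j$ is
\begin{equation}
\Delta I_j = \log\frac{p'}{p} - \log\frac{D_j - w(p - p')}{D_j} = \log\frac{p'}{p} - \log\Bigl(1 - \tfrac{w(p-p')}{D_j}\Bigr).
\end{equation}
Since $D_j = w p + (\text{nonnegative terms}) \geq w p$, we get $w(p - p')/D_j \leq (p - p')/p = 1 - p'/p$, hence $1 - w(p-p')/D_j \geq p'/p$, and therefore $-\log(1 - w(p-p')/D_j) \geq -\log(p'/p) = \log(p/p') = -\log(p'/p)$, which exactly cancels the first term to give $\Delta I_j \geq 0$.

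The main obstacle I anticipate is the value-based ($\epsilon$-greedy / $\arg\max$) case: there the map $F \mapsto p$ is not continuous, and a small decrease in $F(u_v,h_j,a_j)$ either leaves $p(a_j\mid u_v,h_j)$ unchanged (if $a_j$ remains the argmax) or drops it discretely (if $a_j$ ceases to be the argmax). I would handle this by noting that in the first sub-case $\Delta I_j \geq 0$ trivially (denominator weakly down, numerator fixed), and in the second sub-case $p(a_j \mid u_v, h_j)$ strictly decreases, so the inequality $D_j \geq w p$ used above still holds and the same algebraic bound applies; one only needs the monotonicity ``$p' \leq p$'', never differentiability. I would also state explicitly the mild regularity assumption that $p(a_j \mid u_i, h_j) > 0$ for the relevant indices so the logarithms are defined (true for $\epsilon$-greedy with $\epsilon > 0$ and for softmax/PPO policies), and remark that equality $I'_j = I_j$ occurs precisely when $w = 0$ or when $D_j = w p$ (i.e. all other modes assign zero probability to $a_j$).
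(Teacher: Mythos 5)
Your handling of the case $u_j \neq v$ is correct and is essentially the paper's own argument: the mixture $D_j=\sum_{i=1}^K p(u_i|h_j)\,p(a_j|u_i,h_j)$ weakly decreases because only its $i=v$ summand (with nonnegative weight $p(u_v|h_j)$) can decrease, the numerator $\log p(a_j|u_j,h_j)$ is untouched, hence $I'_j\geq I_j$; and your case split for the greedy/$\epsilon$-greedy policy (argmax unchanged versus argmax flipped, needing only the monotonicity $p'\leq p$) mirrors the paper's Case~2 enumeration.

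The genuine problem is the sub-case $u_j=v$ that you call delicate and claim to resolve. The last step has a sign error: from $1-\frac{w(p-p')}{D_j}\geq \frac{p'}{p}$ and the fact that $-\log$ is \emph{decreasing}, the correct consequence is $-\log\bigl(1-\frac{w(p-p')}{D_j}\bigr)\leq -\log\frac{p'}{p}$, not $\geq$, so your bound gives $\Delta I_j\leq 0$ rather than the cancellation you assert. In fact the claim is false in this sub-case: writing $D_j=wp+R$ with $R=\sum_{i\neq v}p(u_i|h_j)\,p(a_j|u_i,h_j)\geq 0$, one gets
\begin{equation}
I'_j-I_j=\log\frac{p'\,(wp+R)}{p\,(wp'+R)},\qquad p'(wp+R)-p(wp'+R)=(p'-p)R\leq 0,
\end{equation}
so $I'_j\leq I_j$, strictly whenever $R>0$ and $p'<p$ (and in the greedy value-based reading the numerator can even collapse to $\log 0$ if the argmax flips). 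This is consistent with intuition: lowering the acting module's own preference for the action it took reduces, not raises, the mutual-information term. The theorem is intended—and the paper's proof silently assumes—that the perturbed module is not the acting one, i.e.\ $v\neq u_j$, which matches the definition of $\bar{I}(A;U|H)$ summing only over modules other than $u_j$, so that only the denominator of $I_j$ is affected. You should either drop the $u_j=v$ sub-case or add $v\neq u_j$ as an explicit hypothesis; as written, that part of your argument does not go through.
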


\begin{proof}
In consideration of the relationship between $F(u,h,a)$ and $p(u,h,a)$, there are two cases to be addressed.
\paragraph{Case 1}$F(u,h,a) = p(a|u,h)$

With the condition stated in (\ref{eqa:condition}), only $p(a_j|u_v,h_j)$ will be changed among all the terms in $I_j$. Consequently, 

\begin{equation}
\begin{split}
    I'_j - I_j =\ &\log \Big[ 
        p(u_v|h_j)\, p(a_j|u_v,h_j) \\
        &\quad + \sum_{\substack{i=1\\i\neq v}}^K 
        p(u_i|h_j)\, p(a_j|u_i,h_j) 
    \Big] \\
    & - \log \Big[ 
        p(u_v|h_j)\, p'(a_j|u_v,h_j) \\
        &\quad + \sum_{\substack{i=1\\i\neq v}}^K 
        p(u_i|h_j)\, p(a_j|u_i,h_j) 
    \Big]
\end{split}
\end{equation}

Since $p'(a_j|u_v,h_j) < p(a_j|u_v,h_j)$ and $p(u_v|h_j) \geq 0$, $I'_j \geq I_j$.

\paragraph{Case 2}$F(u,h,a) = A(u,h,a)$

In this case, $p(a|u,h) = 1$ if $a = \arg\max Q(u,h,a)$ where $Q(u,h,a)=A(u,h,a)+V(u,h)$, else $p(a|u,h) = 0$ \footnote{If the meta-agent uses an $\epsilon$-greedy strategy for exploration, then the corresponding value is $1-\epsilon$ and $\epsilon/(|A|-1)$. This difference has no impact on the proof.}.

According to (\ref{eqa:condition}), only $Q(u_v,h_j,a_j)$ changes, and this leads to three possible outcomes:

\begin{enumerate}
    \item $a_j \neq \arg\max Q(u_v,h_j,a)$ and \\
    \hspace{1em} $a_j \neq \arg\max Q'(u_v,h_j,a)$. \\
    In this situation, $p(a_j|u_v,h_j)$ remains the same, and $I'_j = I_j$.

    \item $a_j = \arg\max Q(u_v,h_j,a)$ and \\
    \hspace{1em} $a_j = \arg\max Q'(u_v,h_j,a)$. \\
    Similarly, $I'_j = I_j$.

    \item $a_j = \arg\max Q(u_v,h_j,a)$ and \\
    \hspace{1em} $a_j \neq \arg\max Q'(u_v,h_j,a)$. \\
    In this situation, $p(a_j|u_v,h_j)=1$ and $p'(a_j|u_v,h_j)=0$. \\
    As is proved before, $p'(a_j|u_v,h_j) < p(a_j|u_v,h_j)$ leads to $I'_j \geq I_j$.
\end{enumerate}
\end{proof}

\section{B. Implementation Details}
\paragraph{Hardware and software settings} We experiment on a server with 2 x RTX 3090 and a Intel Xeon Platium CPU (12 cores), and training one scalable population models takes around 15 hours. The experimental codes are modified based on the open source codes of OBL. 

\paragraph{Neural network hyper parameters} Fig.2 in the main text shows the architecture of meta agent with K sub-decision modules, which implies that it possesses an equivalent population size of K, and below introduces the hyper parameters of each module. FC layer is a linear transform with output size 512. LSTM has two layers with hidden dim 512. $P_i$ is a two-layer fully
connected network.

\paragraph{Training hyper parameters} All the models are training 500 epochs with replay buffer size 35000 and batch size 128. Parameters are updated via Adam optimizer with learning rate 6.25e-5. Discount factor $\gamma$ is set to 0.999.

\section{C. Detailed results of Matrix game}
This section present all all experiments on Matrix game with different dimensions. To evaluate the performance of various algorithms as task complexity increases, we raise the game difficulty by constructing matrices of varying dimensions, each based on a base 10×10 matrix template. The pseudocode for matrix generation is given below.

\begin{algorithm}[htbp]
\caption{Generate Shifted Block Matrix}
\label{alg:shifted-matrix}
\begin{algorithmic}[1]
\REQUIRE Block count $n_b$, block size $d=10$
\ENSURE Matrix $M \in \mathbb{R}^{(n_b d) \times (n_b d)}$
\STATE Initialize $M \gets 0$
\FOR{$k = 0$ to $n_b - 1$}
    \STATE $B \gets I_d$, $B[1][1] \gets 0$, $B[1][0] \gets 1$
    \FOR{$i = 2$ to $d{-}1$}
        \FOR{$j \in \{i{-}1,\ i{+}1\}$}
            \IF{$1 < j < d$}
                \STATE $B[i][j] \gets \epsilon$
            \ENDIF
        \ENDFOR
    \ENDFOR
    \STATE Flatten $B$ to $b$, compute $s = (\text{arange}(d^2) + k)\bmod d^2$
    \STATE Reshape $b[s]$ to $B' \in \mathbb{R}^{d \times d}$
    \STATE Insert $B'$ into $M[kd{:}(k{+}1)d,\ kd{:}(k{+}1)d]$
\ENDFOR
\RETURN $M$
\end{algorithmic}
\end{algorithm}

\needspace{6\baselineskip}

\subsection{Matrix dimension = 10}
\begin{figure}[H]
    \centering
    \includegraphics[width=0.8\columnwidth]{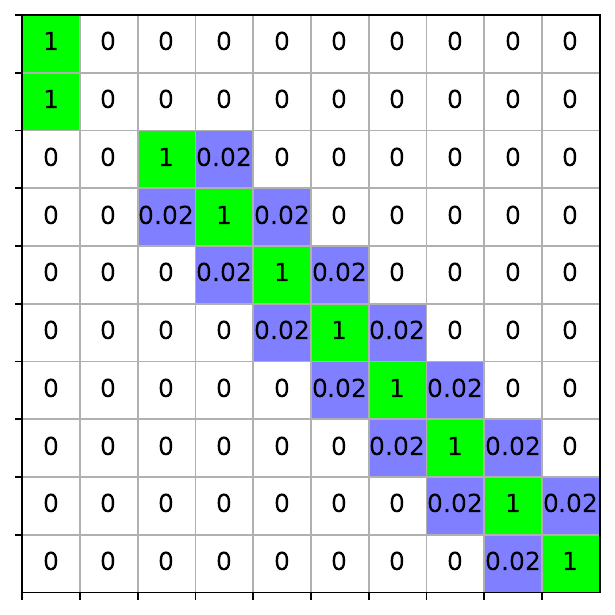}
    \caption{Matrix game, dimension = 10}
\end{figure}
\begin{figure}[H]
    \centering
    \includegraphics[width=\columnwidth]{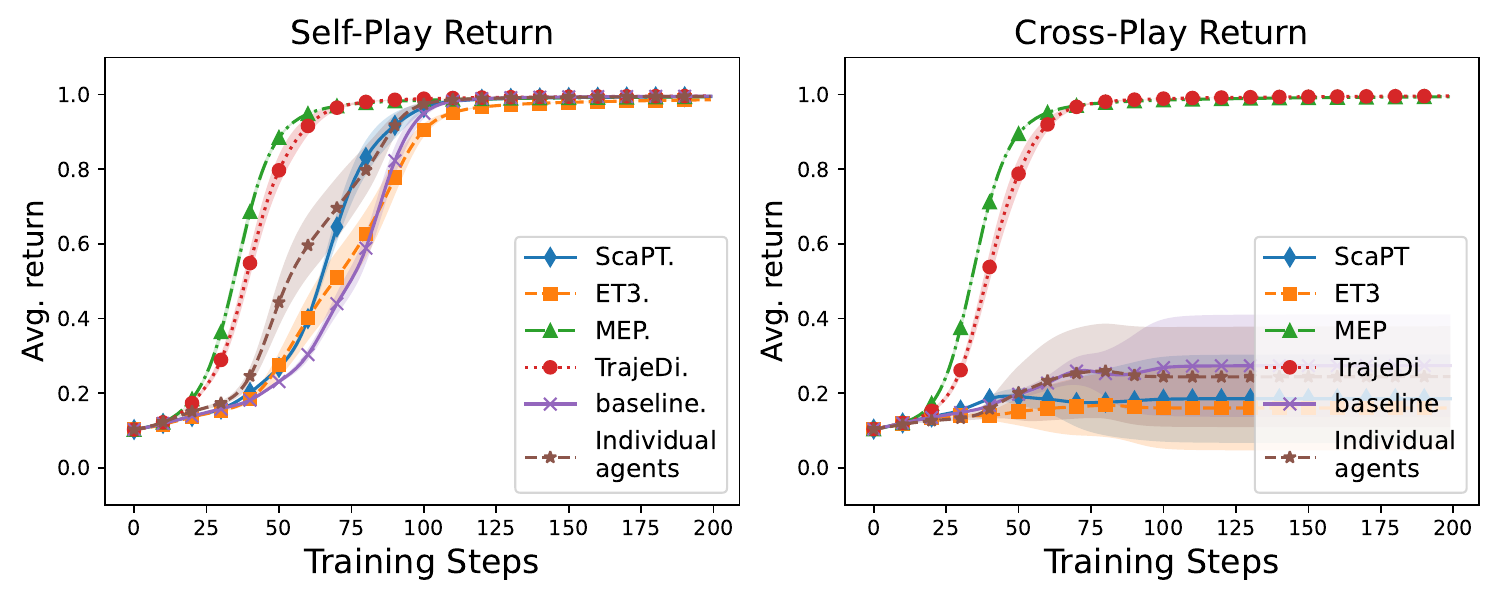}
    \caption{Matrix dimension = 10, population size = 2}
\end{figure}
\begin{figure}[htbp]
    \centering
    \includegraphics[width=\columnwidth]{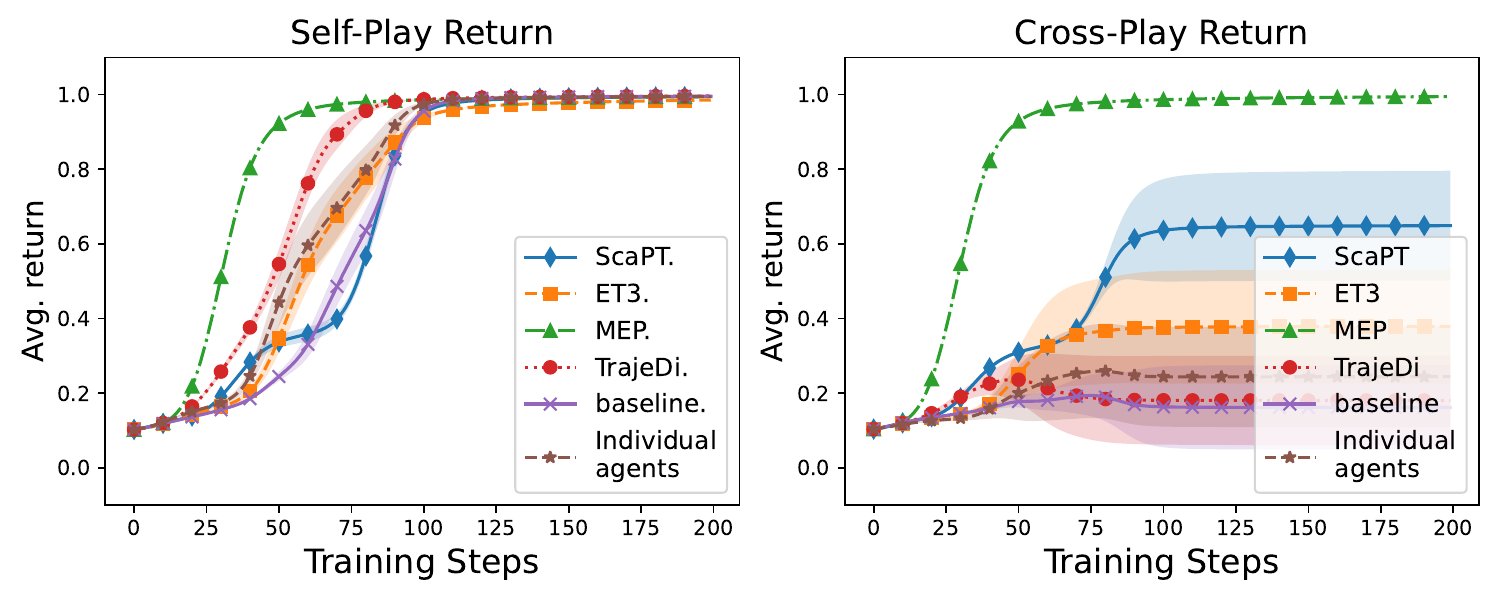}
    \caption{Matrix dimension = 10, population size = 5}
\end{figure}
\begin{figure}[H]
    \centering
    \includegraphics[width=\columnwidth]{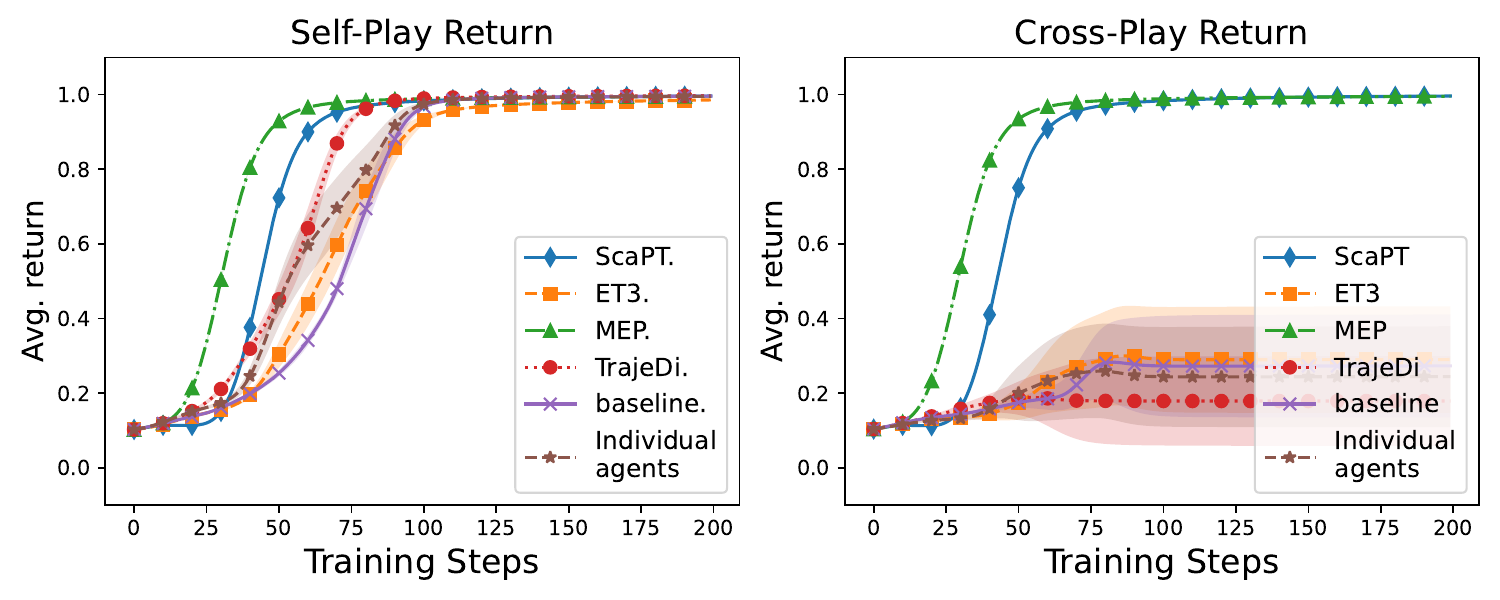}
    \caption{Matrix dimension = 10, population size = 10}
\end{figure}
\FloatBarrier

\subsection{matrix dimension = 30}
\begin{figure}[H]
    \centering
    \includegraphics[width=0.8\columnwidth]{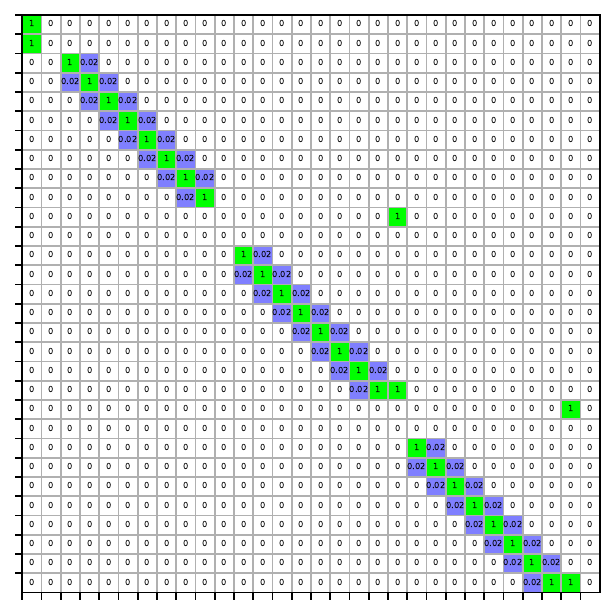}
    \caption{Matrix game, dimension = 10}
\end{figure}
\begin{figure}[H]
    \centering
    \includegraphics[width=\columnwidth]{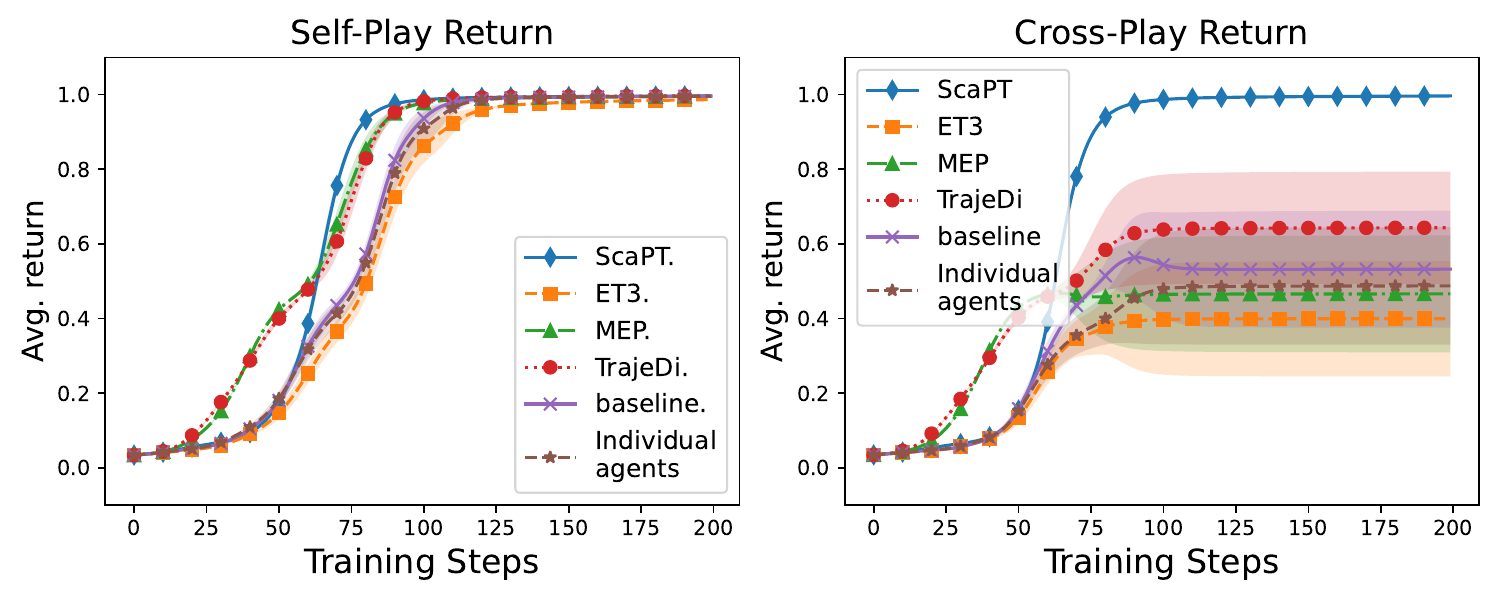}
    \caption{Matrix dimension = 30, population size = 2}
\end{figure}

\begin{figure}[H]
    \centering
    \includegraphics[width=\columnwidth]{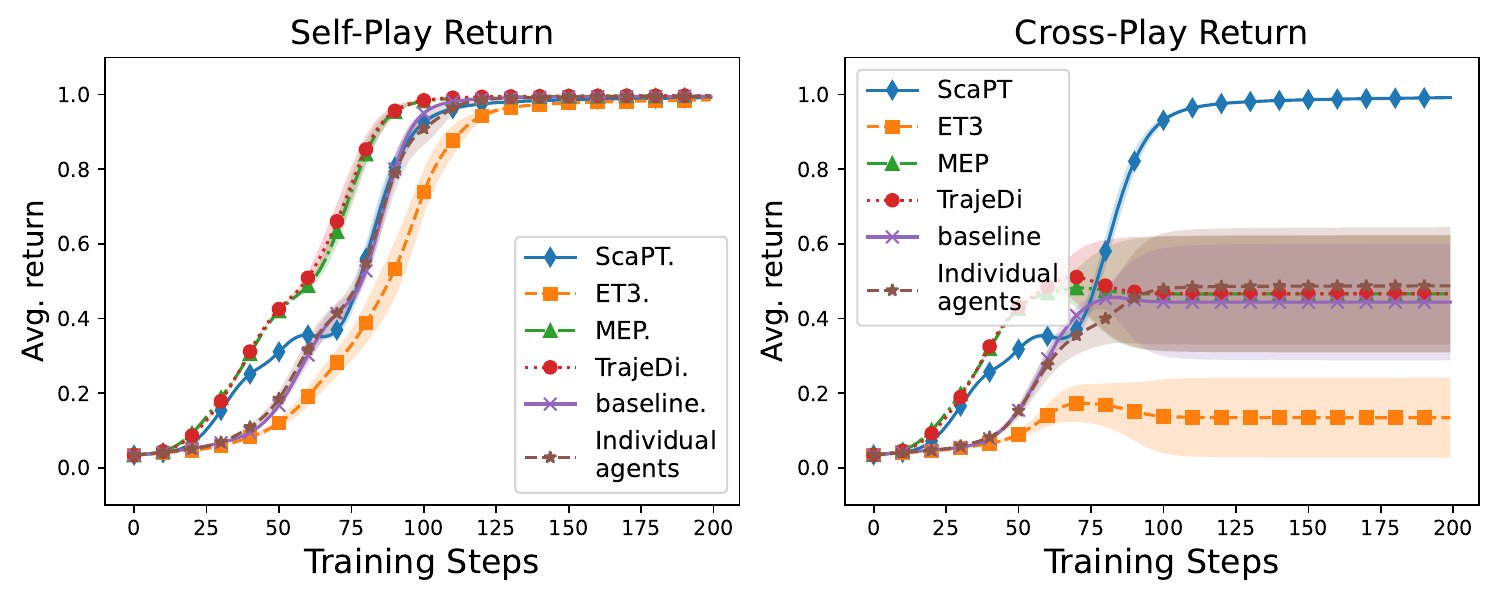}
    \caption{Matrix dimension = 30, population size = 10}
\end{figure}

\begin{figure}[H]
    \centering
    \includegraphics[width=\columnwidth]{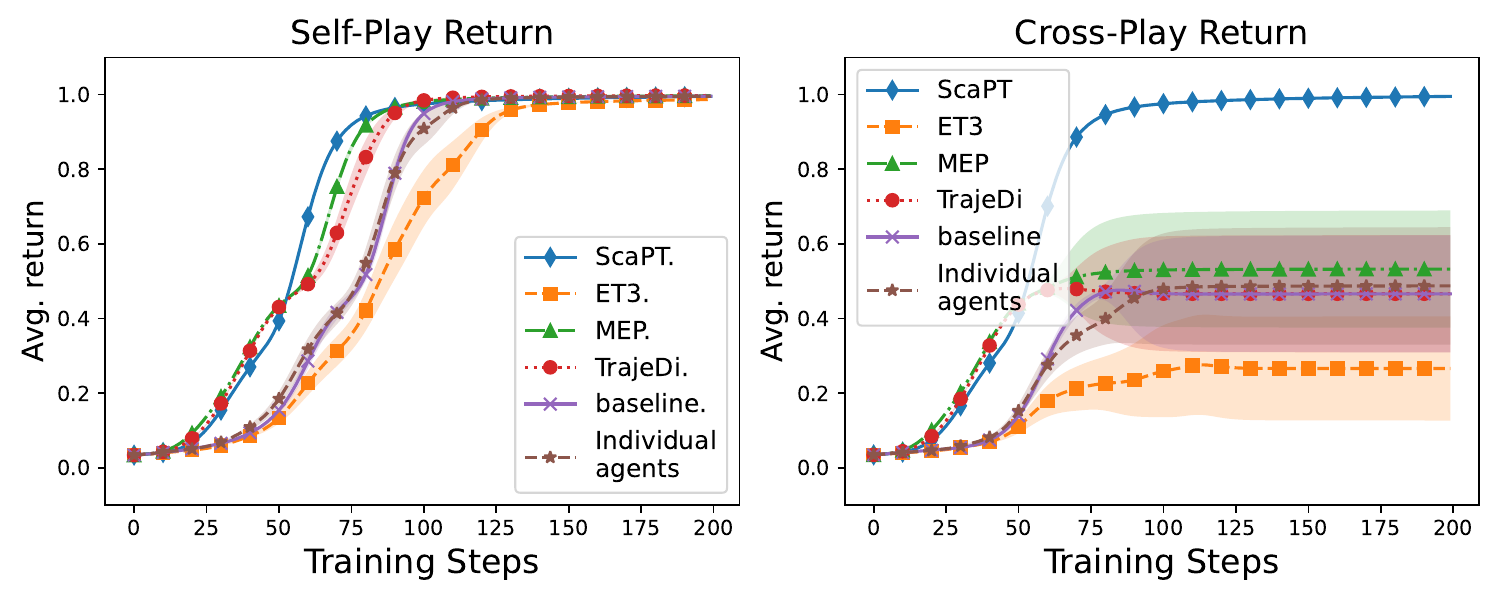}
    \caption{Matrix dimension = 30, population size = 20}
\end{figure}

\subsection{matrix dimension = 50}
\begin{figure}[H]
    \centering
    \includegraphics[width=0.8\columnwidth]{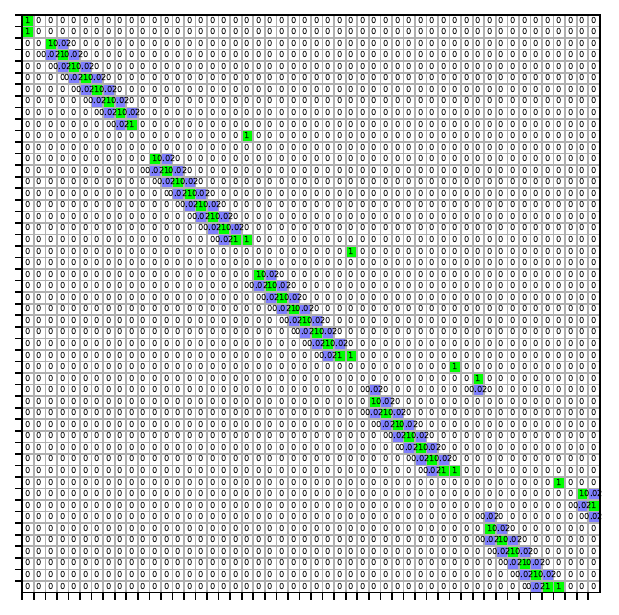}
    \caption{Matrix game, dimension = 50}
\end{figure}
\begin{figure}[H]
    \centering
    \includegraphics[width=\columnwidth]{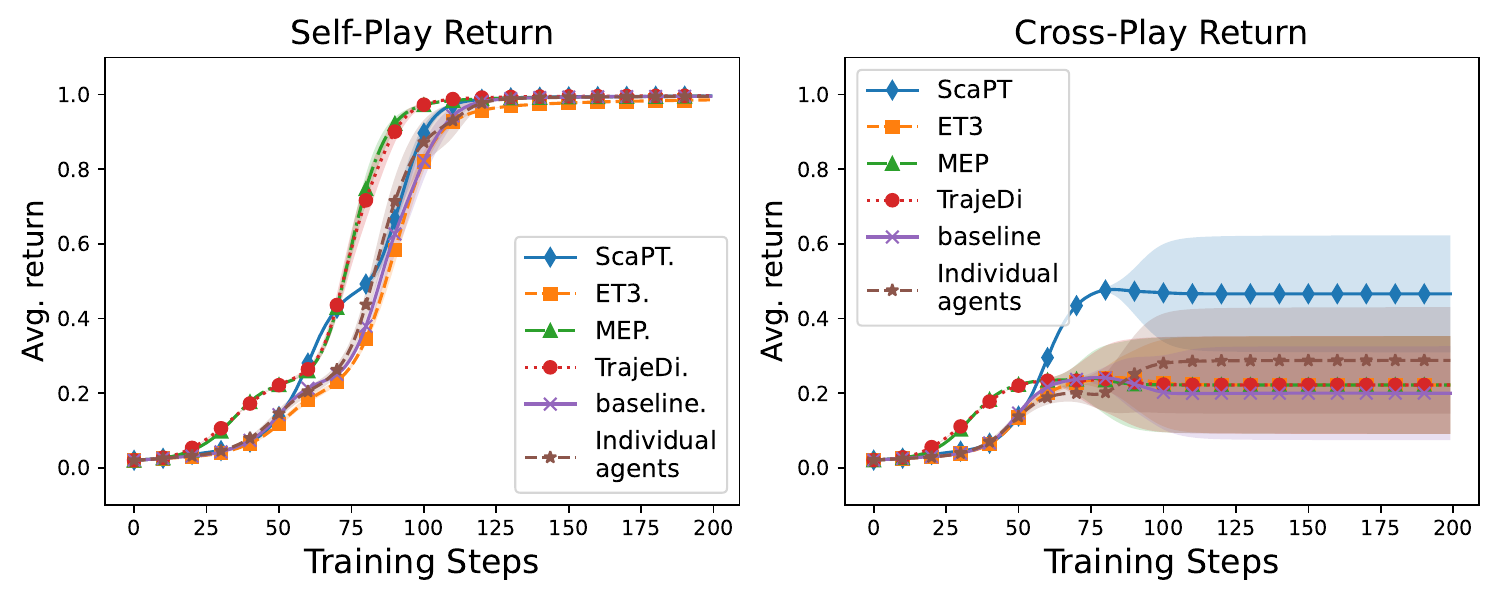}
    \caption{Matrix dimension = 50, population size = 2}
\end{figure}

\begin{figure}[H]
    \centering
    \includegraphics[width=\columnwidth]{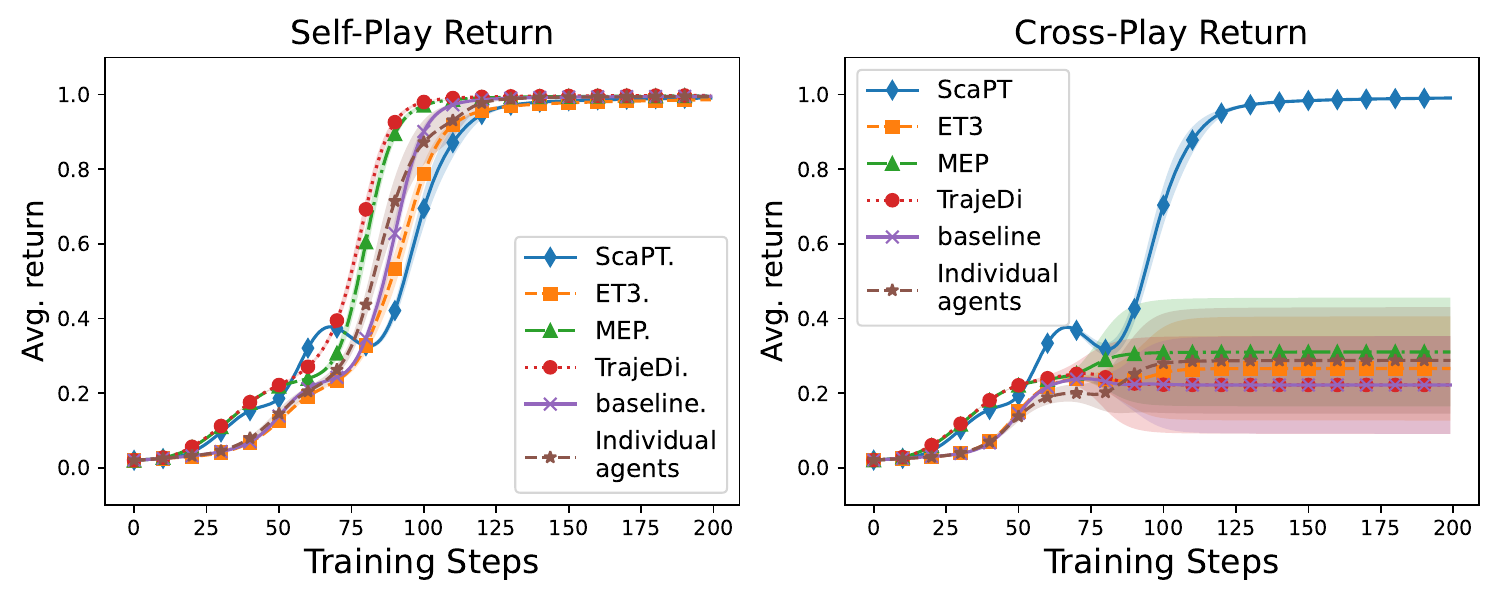}
    \caption{Matrix dimension = 50, population size = 10}
\end{figure}

\begin{figure}[H]
    \centering
    \includegraphics[width=\columnwidth]{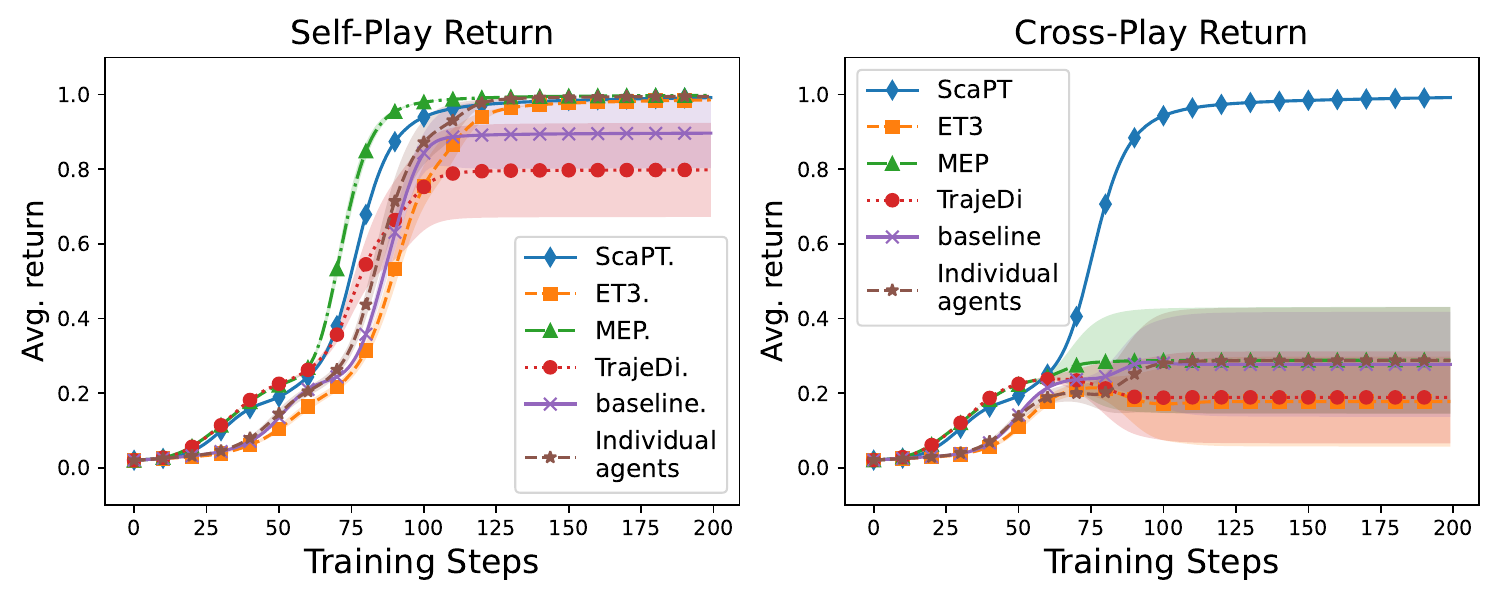}
    \caption{Matrix dimension = 50, population size = 20}
\end{figure}

\begin{figure}[H]
    \centering
    \includegraphics[width=\columnwidth]{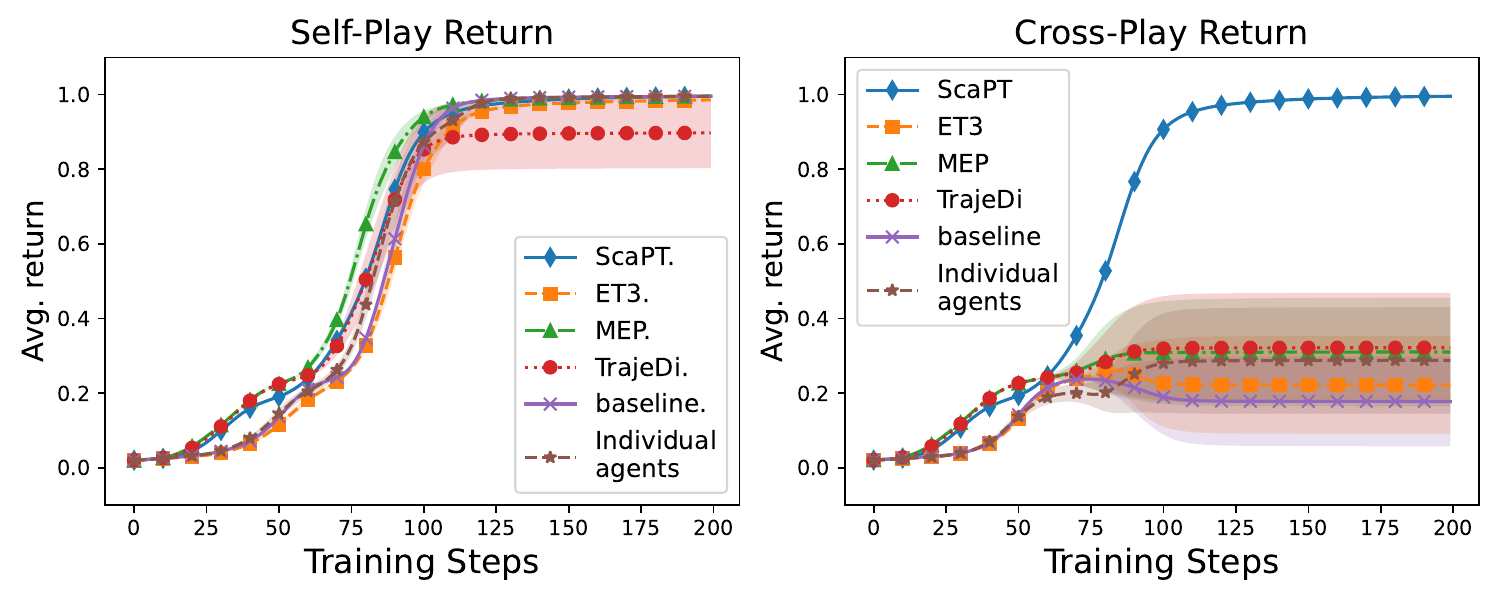}
    \caption{Matrix dimension = 50, population size = 30}
\end{figure}

\subsection{matrix dimension = 100}
\begin{figure}[H]
    \centering
    \includegraphics[width=0.8\columnwidth]{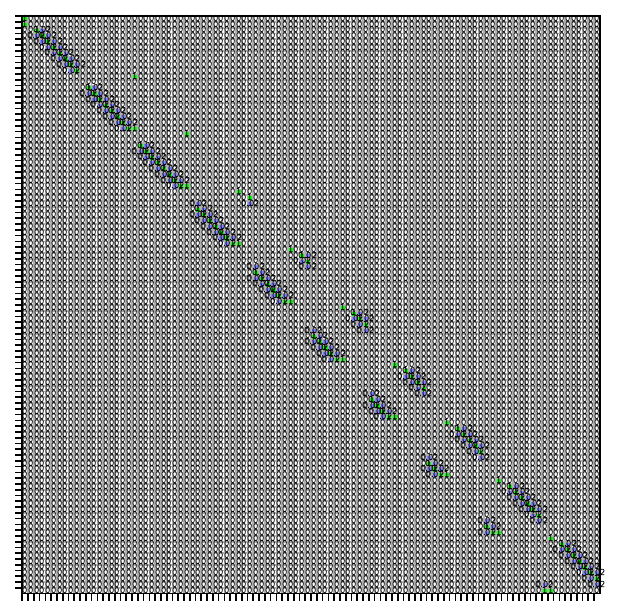}
    \caption{Matrix game, dimension = 100}
\end{figure}
\begin{figure}[H]
    \centering
    \includegraphics[width=\columnwidth]{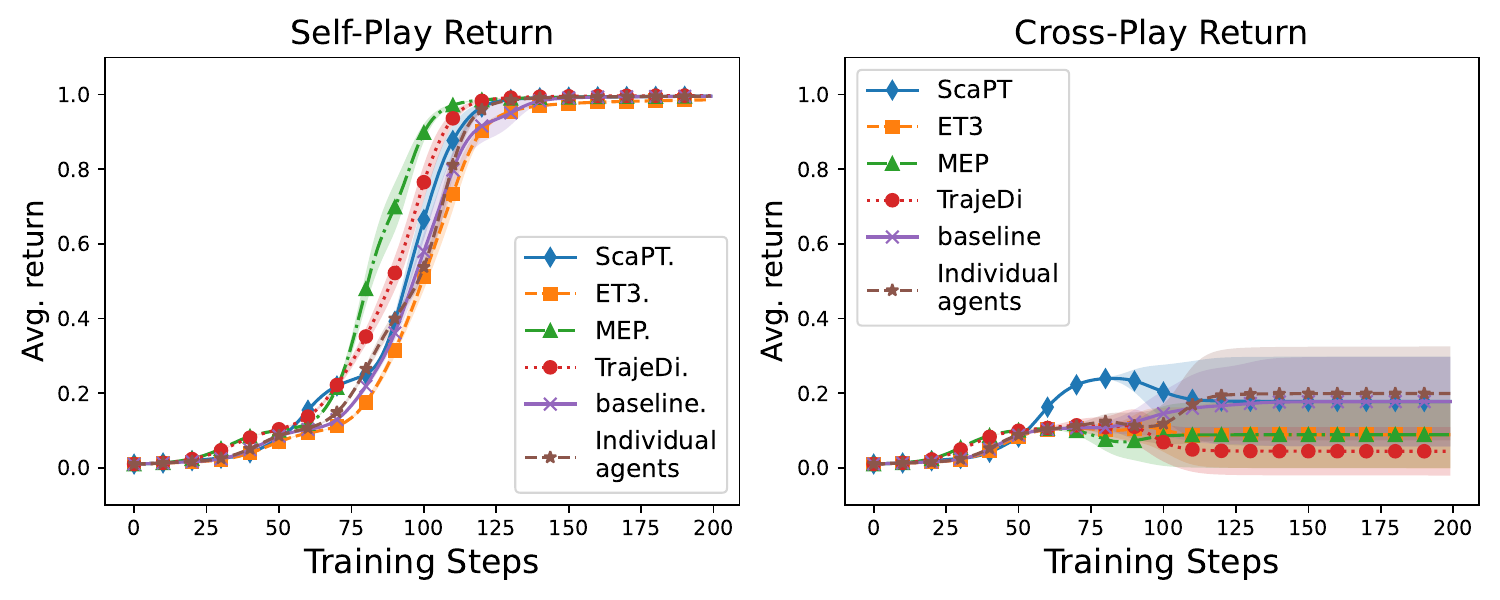}
    \caption{Matrix dimension = 100, population size = 2}
\end{figure}

\begin{figure}[H]
    \centering
    \includegraphics[width=\columnwidth]{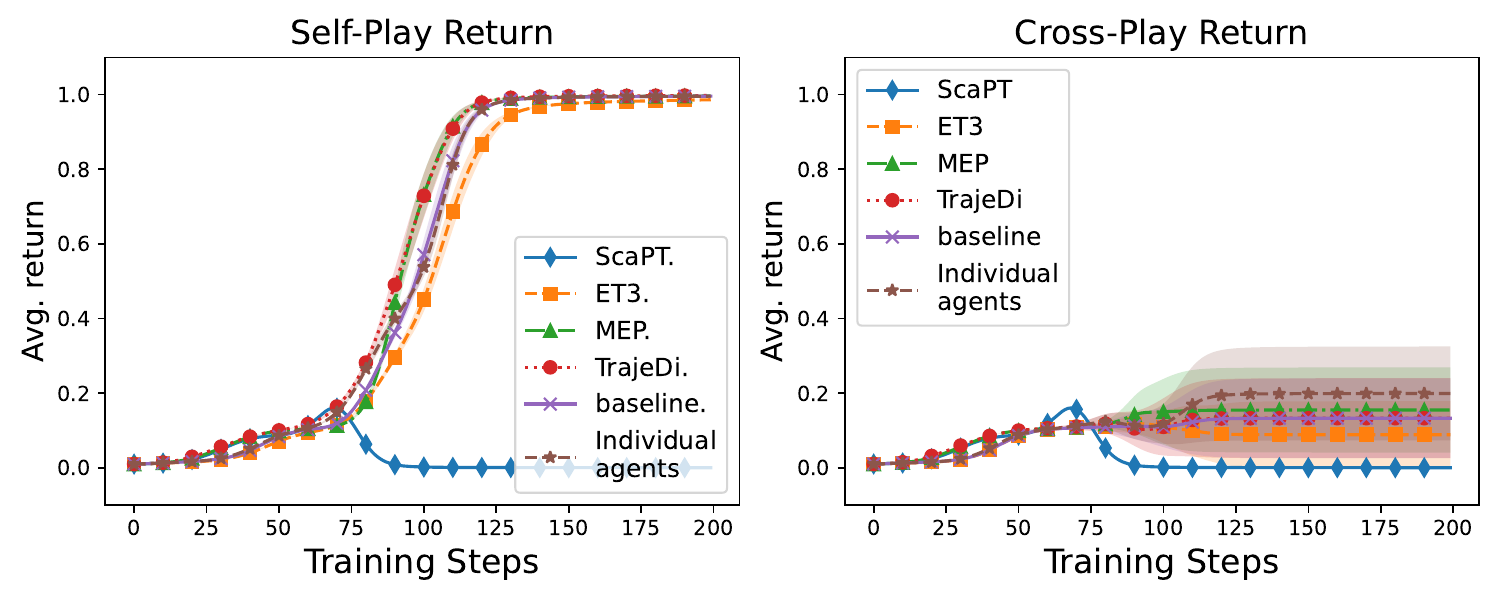}
    \caption{Matrix dimension = 100, population size = 10}
\end{figure}

\begin{figure}[htbp]
    \centering
    \includegraphics[width=\columnwidth]{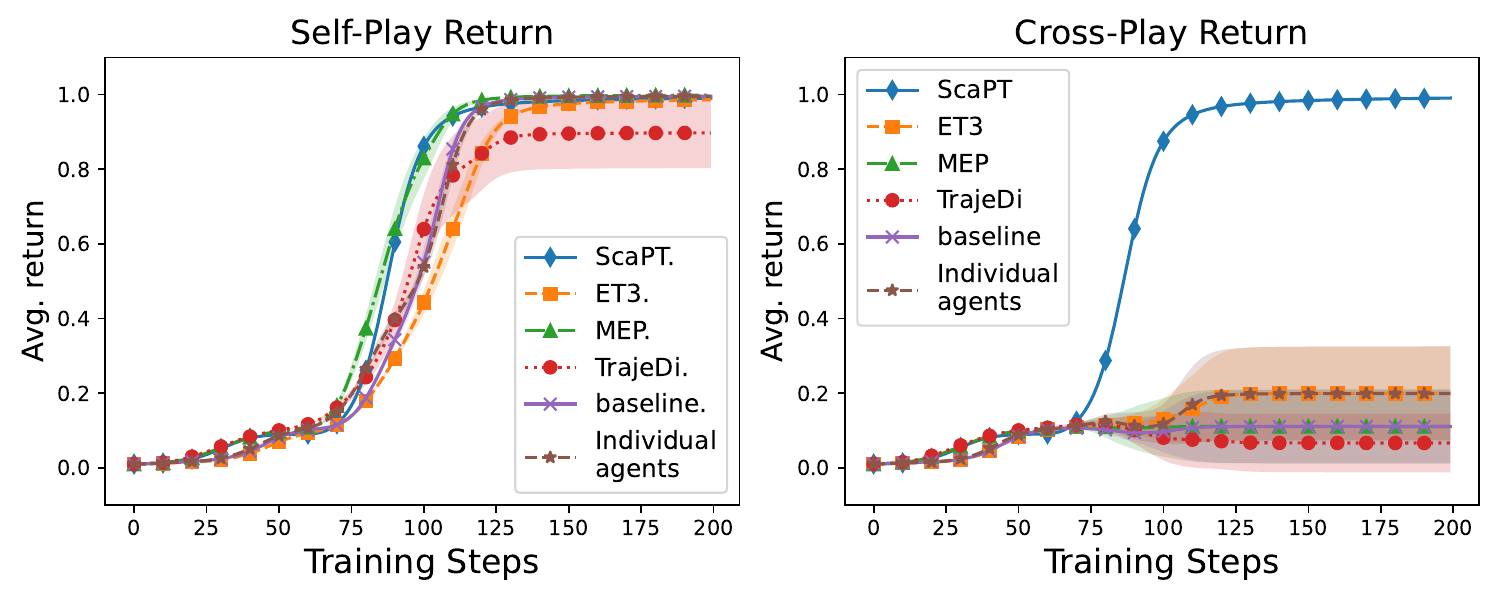}
    \caption{Matrix dimension = 100, population size = 20}
\end{figure}

\begin{figure}[htbp]
    \centering
    \includegraphics[width=\columnwidth]{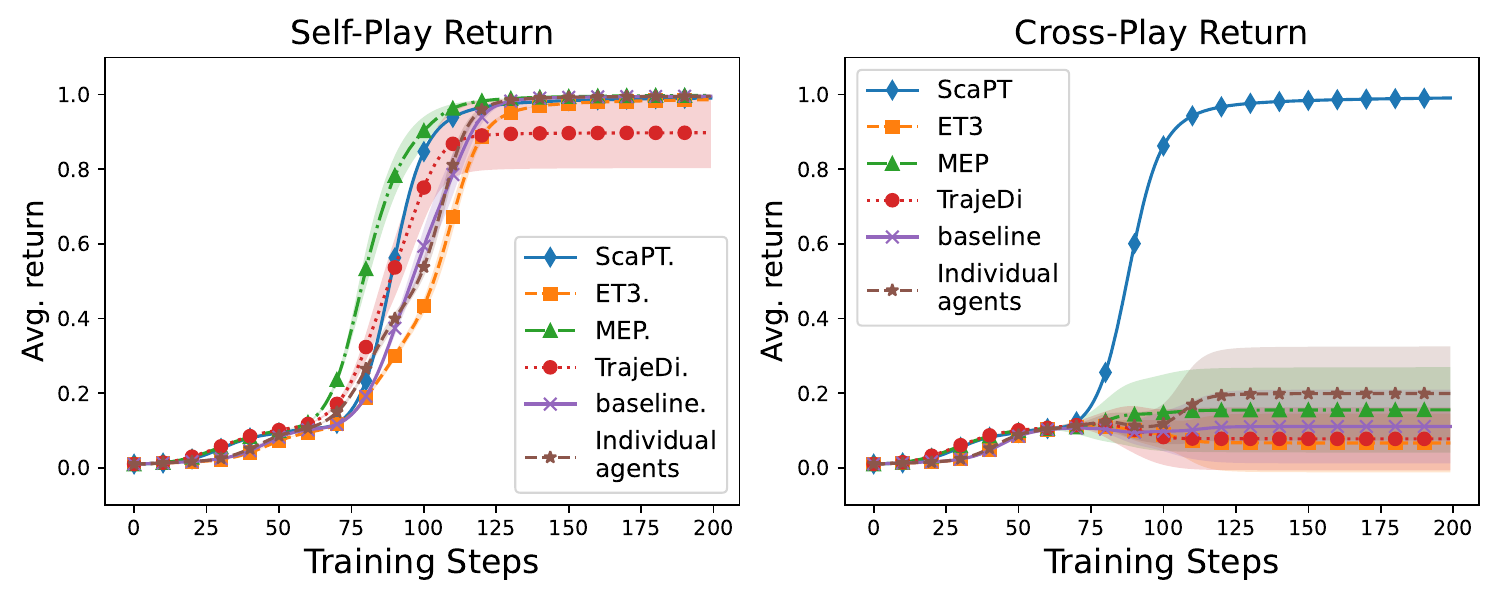}
    \caption{Matrix dimension = 100, population size = 30}
\end{figure}

Based on the aforementioned experiments, we summarize the following findings:
\begin{itemize}
\item As the dimension of the matrix game increases, corresponding to higher task difficulty, the performance of all algorithms degrades or even fails under fixed population sizes.
    \item Among the baselines, population-based methods cannot mitigate this performance degradation by simply increasing population size.
    \item In contrast, our proposed method successfully leverages larger populations to achieve superior performance on more challenging tasks, significantly outperforming other algorithms.
\end{itemize}

\section{D. Detailed results of hanabi game}

In this section, we present the detailed performance of various algorithms on Hanabi games with different numbers of players. As described in the main text, deeper colors represent higher scores, and each row indicates the coordination scores obtained by testing a main agent paired with 40 non-ZSC agents, resulting in a $5 \times 40$ heatmap. Due to the significantly higher training cost of the 5-player Hanabi setting, we report results based on only 4 random seeds for each algorithm in this setting.

\subsection{2-player hanabi game with population size(ps)=5}
\begin{figure}[H]
    \centering
    \includegraphics[width=1\linewidth]{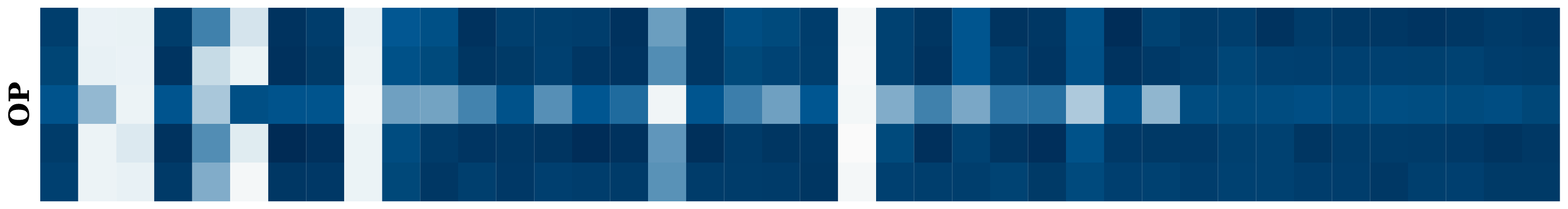}
    \includegraphics[width=1\linewidth]{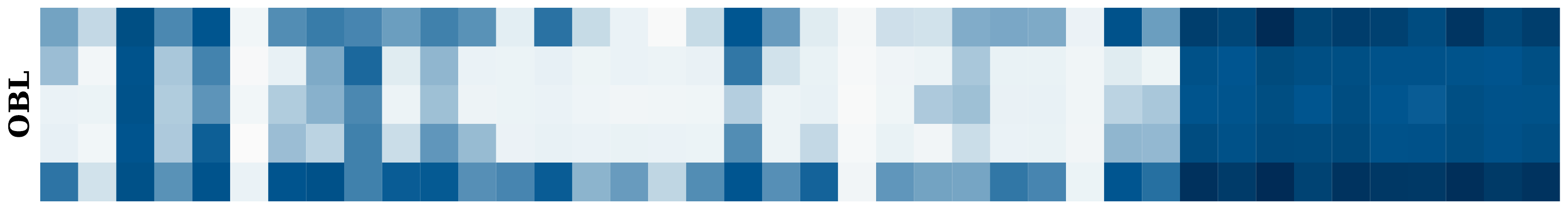}
    \includegraphics[width=1\linewidth]{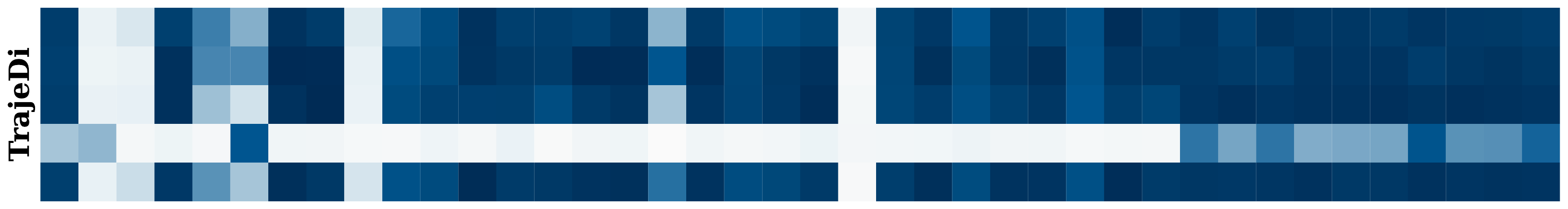}
    \includegraphics[width=1\linewidth]{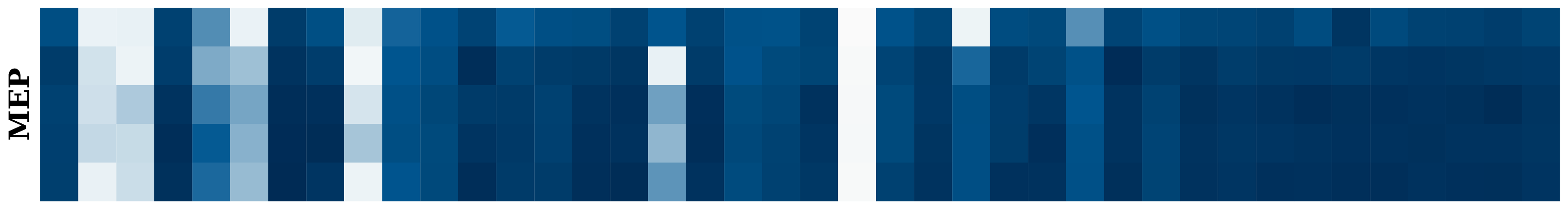}
    \includegraphics[width=1\linewidth]{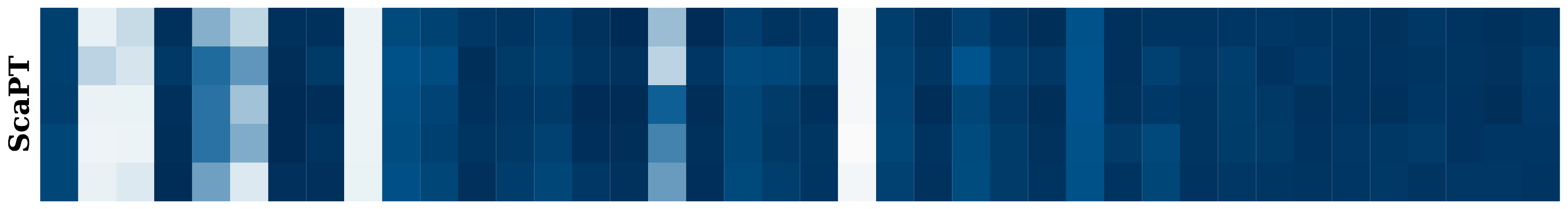}
    \caption{Detailed pair-wise 1ZSC-XP scores of all the testing frameworks(Hanabi, player=2, ps=5).}
    \label{fig: superdetailed}
\end{figure}

\section{Detailed results of 5-player hanabi game with different population-size(ps)}
\begin{figure}[H]  
    \centering

    \begin{subfigure}{\linewidth}
        \centering
        \includegraphics[width=1\linewidth]{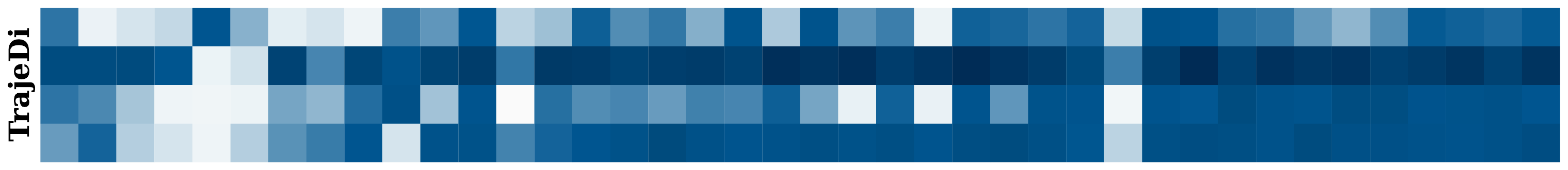}
    \end{subfigure}

    \begin{subfigure}{\linewidth}
        \centering
        \includegraphics[width=1\linewidth]{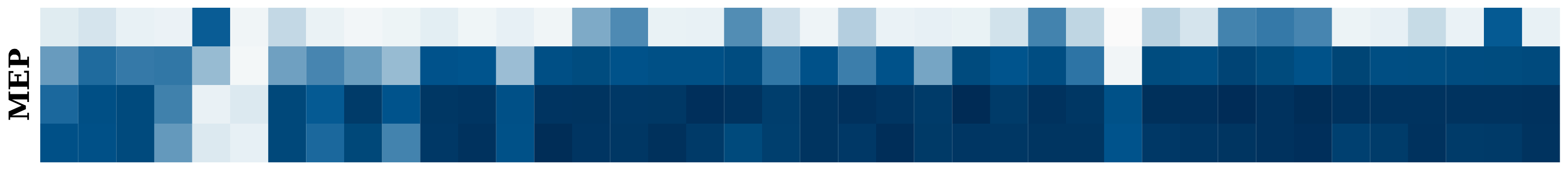}
    \end{subfigure}

    \begin{subfigure}{\linewidth}
        \centering
        \includegraphics[width=1\linewidth]{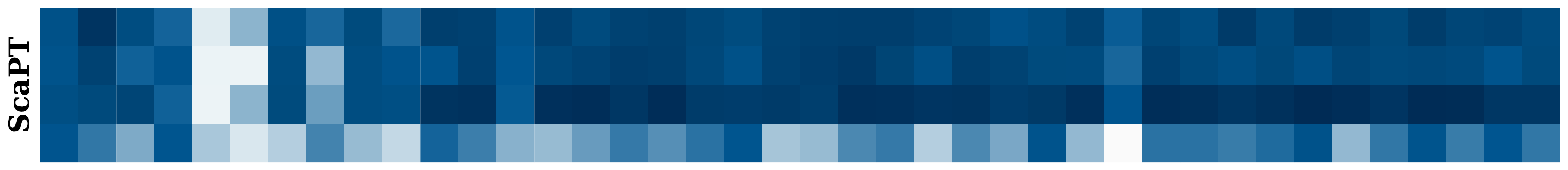}
    \end{subfigure}
    \caption{Detailed pair-wise 1ZSC-XP scores of all the testing frameworks(Hanabi, player=5, ps=2).}
    \label{fig:vertical-train}
\end{figure}

\begin{figure}[H]  
    \centering

    \begin{subfigure}{\linewidth}
        \centering
        \includegraphics[width=1\linewidth]{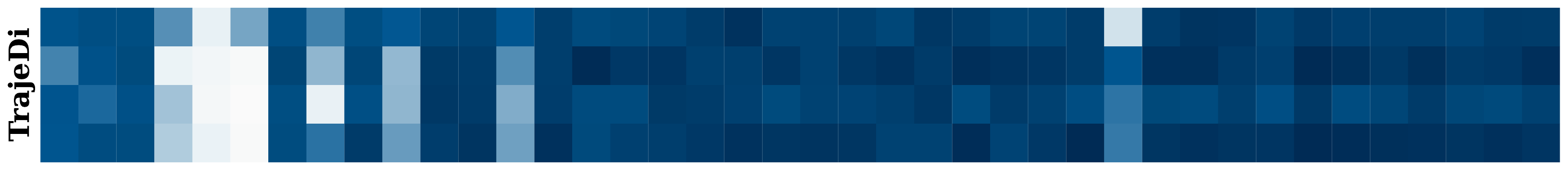}
    \end{subfigure}

    \begin{subfigure}{\linewidth}
        \centering
        \includegraphics[width=1\linewidth]{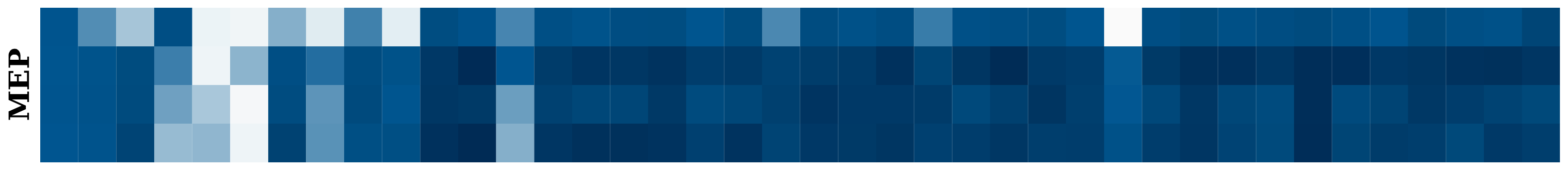}
    \end{subfigure}

    \begin{subfigure}{\linewidth}
        \centering
        \includegraphics[width=1\linewidth]{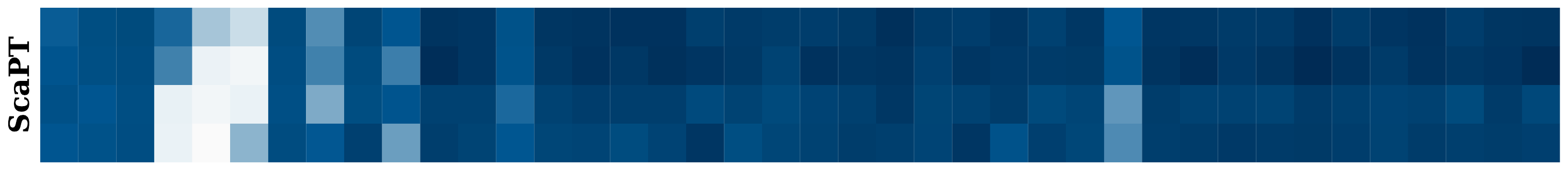}
    \end{subfigure}
    \caption{Detailed pair-wise 1ZSC-XP scores of all the testing frameworks(Hanabi, player=5, ps=5).}
    \label{fig:vertical-train}
\end{figure}

\begin{figure}[H]  
    \centering

    \begin{subfigure}{\linewidth}
        \centering
        \includegraphics[width=1\linewidth]{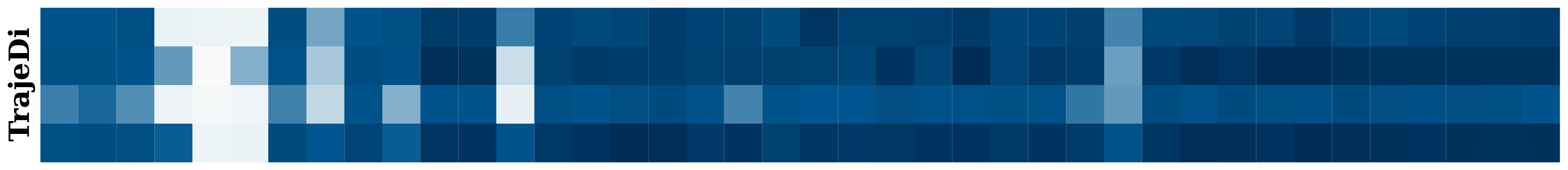}
    \end{subfigure}

    \begin{subfigure}{\linewidth}
        \centering
        \includegraphics[width=1\linewidth]{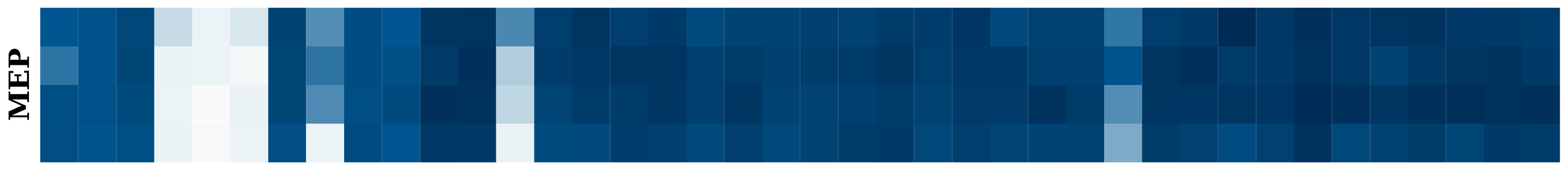}
    \end{subfigure}

    \begin{subfigure}{\linewidth}
        \centering
        \includegraphics[width=1\linewidth]{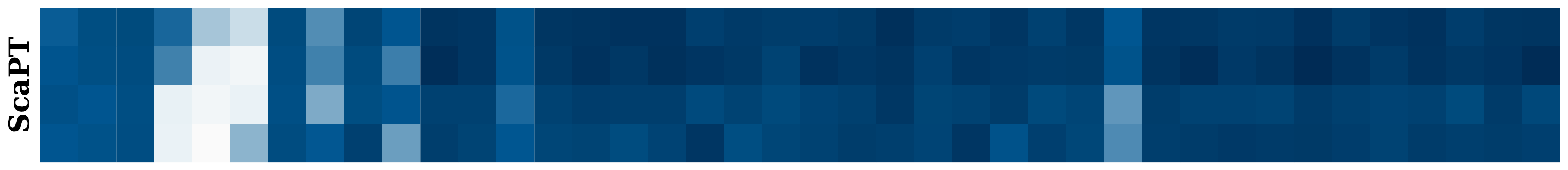}
    \end{subfigure}
    \caption{Detailed pair-wise 1ZSC-XP scores of all the testing frameworks(Hanabi, player=5, ps=8).}
    \label{fig:vertical-train}
\end{figure}

\begin{figure}[h]  
    \centering

    \begin{subfigure}{\linewidth}
        \centering
        \includegraphics[width=1\linewidth]{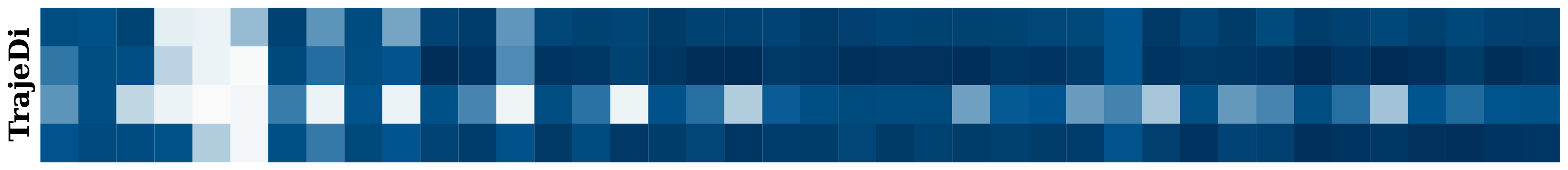}
    \end{subfigure}

    \begin{subfigure}{\linewidth}
        \centering
        \includegraphics[width=1\linewidth]{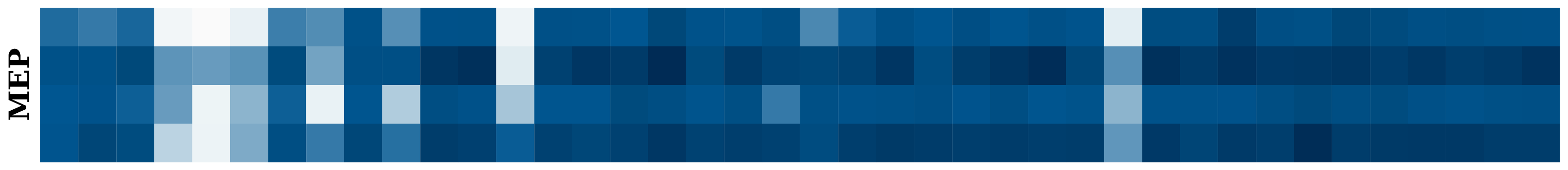}
    \end{subfigure}

    \begin{subfigure}{\linewidth}
        \centering
        \includegraphics[width=1\linewidth]{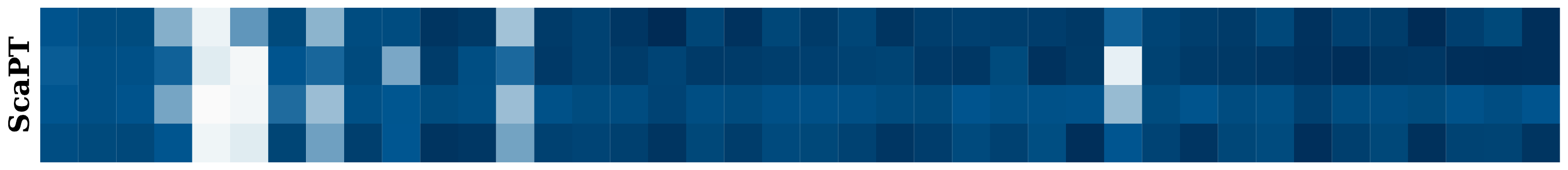}
    \end{subfigure}
    \caption{Detailed pair-wise 1ZSC-XP scores of all the testing frameworks(Hanabi, player=5, ps=15).}
    \label{fig:vertical-train}
\end{figure}

\section{E. Detailed introduction of different training modes}
Table.1 in the main text presents several population training modes, and the following content takes Mode IV as an example to introduce the meaning of each column:
\begin{itemize}
    \item \textbf{Act Group: MM, MP, PP.}: There are three kinds of act groups will be used for interacting with the environment and generating transitions: [Main agent, Main agent], [Main agent, Partner agent] and [Partner agent, Partner agent]. Consequently, 
    \item \textbf{Optimization Objective for $\pi_m$: $J(\pi_m,\pi_m) + \sum_{i=1}^N J(\pi_m,\pi_{pi})$}: The main agent is required to cooperate well with itself and partner agents. This influences the transitions used for training main agent: it has two kinds of transitions, which are playing records with another main agent and playing records with a partner agent, and both of them are used for calculating main agent loss defined in Equation (6).
    \item \textbf{Optimization Objective for $\pi_p$: $J(\pi_{pi},\pi_{pi})$}: The partner agents are only required to cooperate well with itself. Notably, it has two kinds of transitions, which are playing records with another partner agent and playing records with a main agent, and only the first will be used for calculating partner agent loss defined in Equation (7). In contrast, in Mode VI, two kinds of transitions are both used for training due to the different optimization objective for $\pi_p$.
\end{itemize}

\noindent
\begin{minipage}{\columnwidth}
\small
\begin{algorithm}[H]
    \renewcommand{\algorithmicrequire}{\textbf{INPUT:}}
    \renewcommand{\algorithmicensure}{\textbf{OUTPUT:}}
    \caption{Training process of ScaPT with Mode IV} 
    \label{alg:1} 
    \begin{algorithmic}[1]
        \REQUIRE Mutual information term weight $\alpha$, batch size $N_b$, replay buffers $A$, $B$;
        \STATE Initialize $\theta \leftarrow \text{random}$, $\theta_p \leftarrow \text{random}$;
        \STATE Define action groups:
        \STATE \quad $G_1 = [\text{Main agent}, \text{Main agent}]$
        \STATE \quad $G_2 = [\text{Main agent}, \text{Partner agent}]$
        \STATE \quad $G_3 = [\text{Partner agent}, \text{Partner agent}]$
        \WHILE{not reached maximum iterations}
            \FOR{$G \in \{G_1, G_2, G_3\}$}
                \STATE Reset environment if necessary;
                \STATE $o_1^t, o_2^t \leftarrow \text{Observe}(G)$;
                \STATE $h_1^t, h_2^t \leftarrow \text{Update\_hidden\_states}(o_1^t, o_2^t)$;
                \STATE $a_1^t \leftarrow \pi_{\theta}(h_1^t), a_2^t \leftarrow \pi_{\theta}(h_2^t)$;
                \STATE $r_1^t, r_2^t \leftarrow \text{Environment\_rewards}(a_1^t, a_2^t)$;
                
                \IF{$G = G_1$}
                    \STATE Store $(o_1^t, h_1^t, a_1^t, r_1^t, o_1^{t+1})$ and $(o_2^t, h_2^t, a_2^t, r_2^t, o_2^{t+1}) \in A$;
                \ENDIF
                \IF{$G = G_2$}
                    \STATE Store $(o_1^t, h_1^t, a_1^t, r_1^t, o_1^{t+1}) \in A$;
                \ENDIF
                \IF{$G = G_3$}
                    \STATE Store $(o_1^t, h_1^t, a_1^t, r_1^t, o_1^{t+1})$ and $(o_2^t, h_2^t, a_2^t, r_2^t, o_2^{t+1}) \in B$;
                \ENDIF
            \ENDFOR
            
            \STATE Update networks:
            \STATE Sample $N_b$ transitions from $A$, update $\theta$ using loss from Equation (6);
            \STATE Sample $N_b$ transitions from $B$, update $\theta_p$ using loss from Equation (7);
        \ENDWHILE         
    \end{algorithmic} 
\end{algorithm}
\end{minipage}

Algorithm \ref{alg:1} introduces the training process.
\vspace{-2.5em}

\begin{figure}[H]  
    \centering
    \begin{subfigure}[t]{0.48\columnwidth}
        \includegraphics[width=\linewidth]{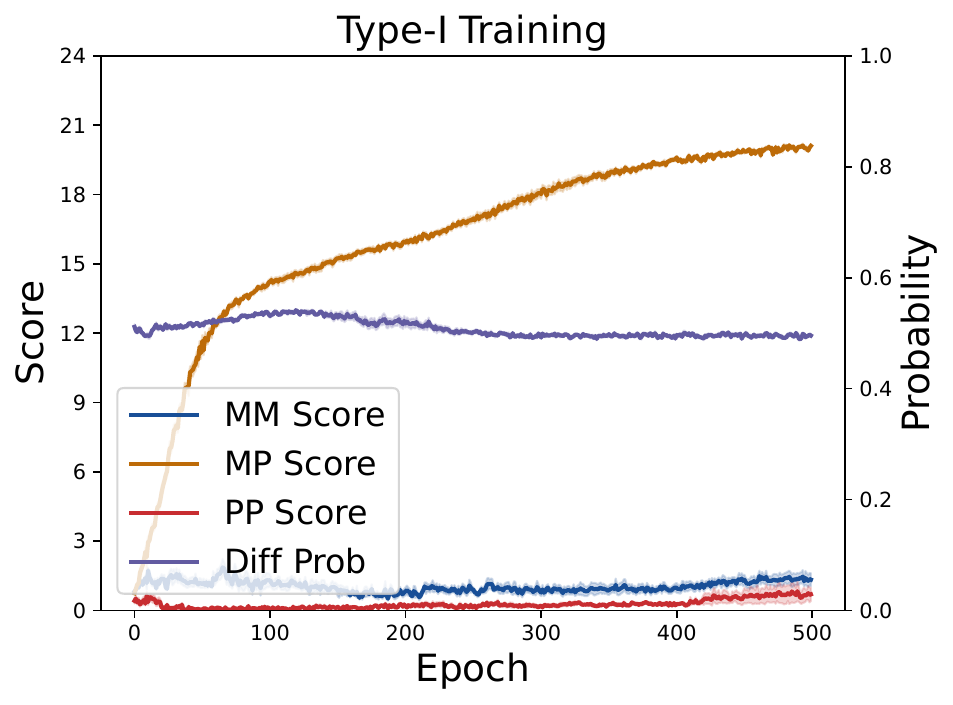}
        \caption{}
    \end{subfigure}
    \hfill
    \begin{subfigure}[t]{0.48\columnwidth}
        \includegraphics[width=\linewidth]{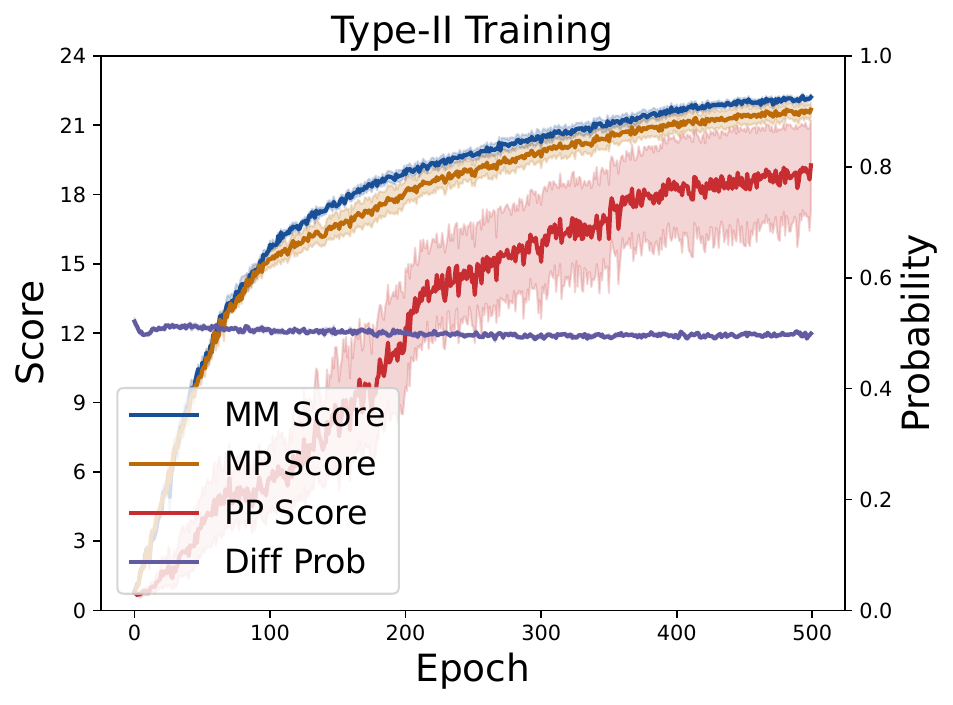}
        \caption{}
    \end{subfigure}

    \begin{subfigure}[t]{0.48\columnwidth}
        \includegraphics[width=\linewidth]{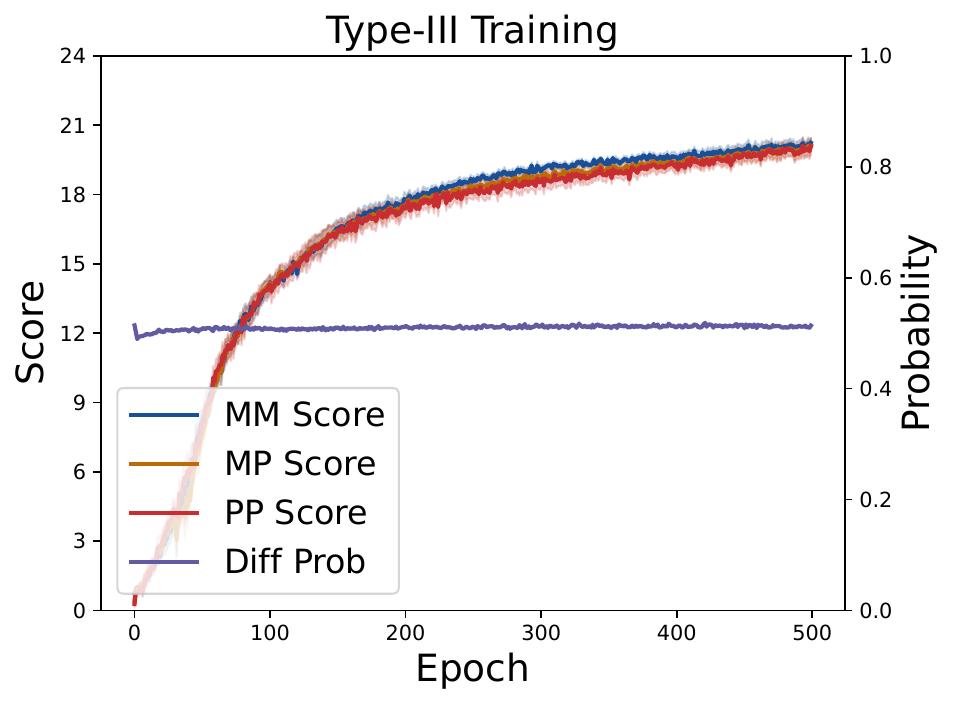}
        \caption{}
    \end{subfigure}
    \hfill
    \begin{subfigure}[t]{0.48\columnwidth}
        \includegraphics[width=\linewidth]{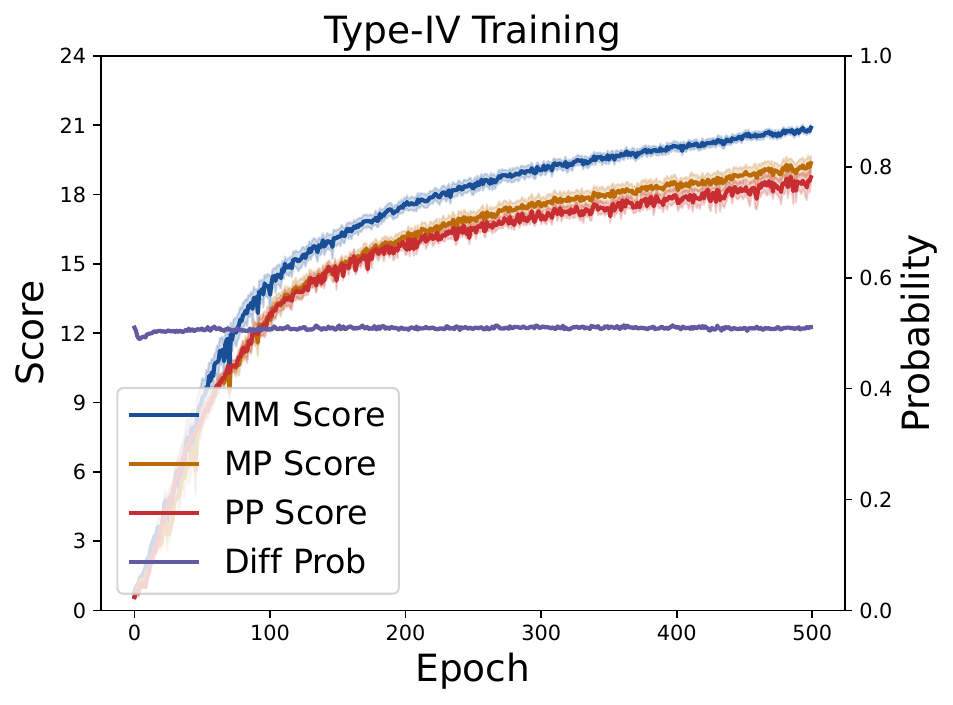}
        \caption{}
    \end{subfigure}

\end{figure}

\begin{figure}[H]\ContinuedFloat
    \centering
    \begin{subfigure}[t]{0.48\columnwidth}
        \includegraphics[width=\linewidth]{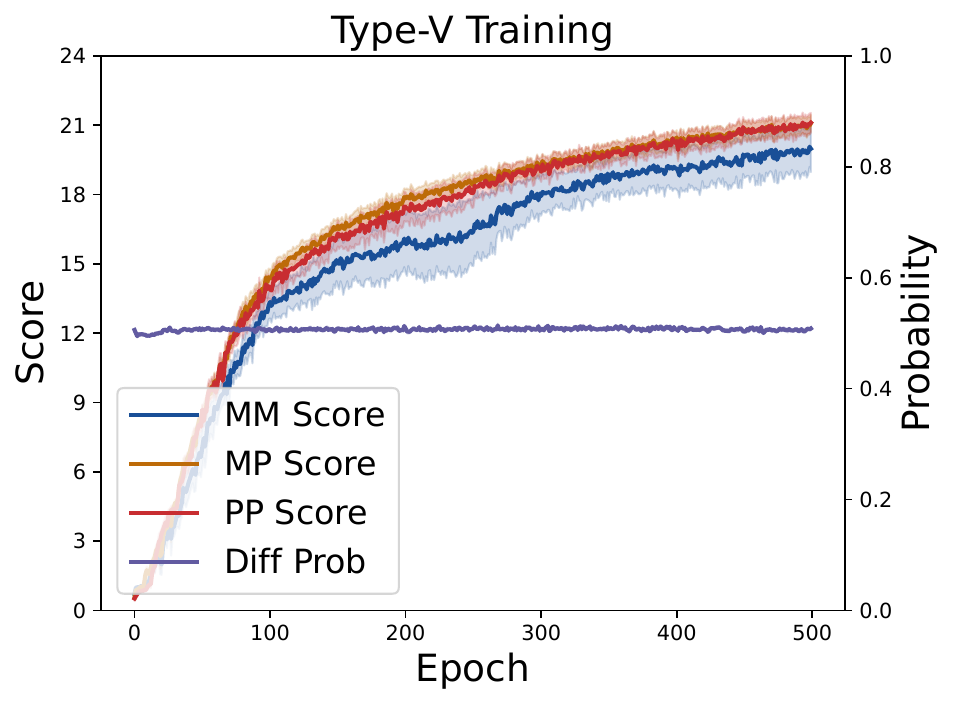}
        \caption{}
    \end{subfigure}
    \hfill
    \begin{subfigure}[t]{0.48\columnwidth}
        \includegraphics[width=\linewidth]{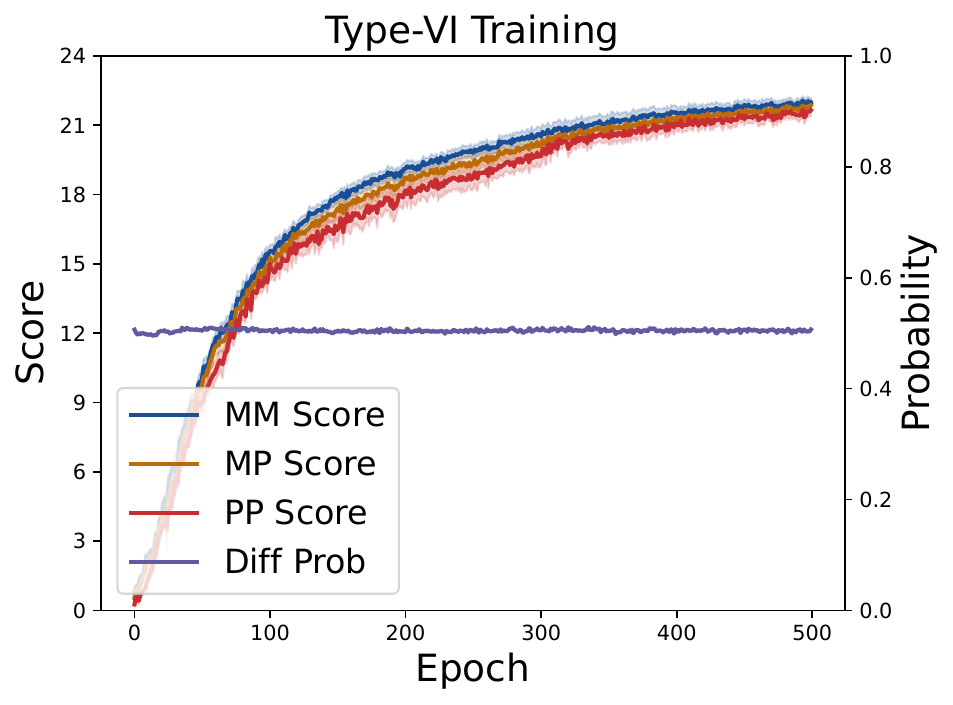}
        \caption{}
    \end{subfigure}
    \caption{Training curves of different training modes.}
    \label{fig:type_all}
\end{figure}




Fig.24 shows the training curves of the different training modes. It can be seen that \textbf{Diff Prob} and \textbf{MP Score} exhibit consistent trends across all training modes, indicating that different modes are consistent in optimizing the primary objective ($J(\pi_m,\pi_p)$) and enhancing diversity.

\section*{Acknowledgments}
This work is supported by National Key Research and Development Program of China (under Grant No.2023YFB2903904), and BNRist projects (No.BNR20231880004 and No.BNR2024TD03003).

\bibliography{ScaPT}

\end{document}